\newtheorem*{rep@theorem}{\rep@title}
\newcommand{\newreptheorem}[2]{%
\newenvironment{rep#1}[1]{%
 \def\rep@title{#2 \ref{##1}}%
 \begin{rep@theorem}}%
 {\end{rep@theorem}}}
\newtheorem{theorem}{Theorem}
\newtheorem{definition}[theorem]{Definition}
\newtheorem{proposition}[theorem]{Proposition}
\newtheorem{corollary}[theorem]{Corollary}
\newtheorem{lemma}[theorem]{Lemma}
\newtheorem{remark}[theorem]{Remark}
\newtheorem{assumption}[theorem]{Assumption}
\newtheorem{example}[theorem]{Example}
\numberwithin{theorem}{section}
\newcommand{\R}{\mathbb{R}}
\newcommand{\CX}{\mathcal{X}}
\newcommand{\CY}{\mathcal{Y}}
\newcommand{\CP}{\mathcal{P}}
\newcommand{\CL}{\mathcal{L}}
\newcommand{\CW}{\mathcal{W}}
\newcommand{\CH}{\mathcal{H}}
\newcommand{\CR}{\mathcal{R}}
\newcommand{\CA}{\mathcal{A}}
\newcommand{\CF}{\mathcal{F}}
\newcommand{\CD}{\mathcal{D}}
\newcommand{\CZ}{\mathcal{Z}}
\newcommand{\BW}{\mathbf{W}}
\newcommand{\BP}{\mathbb{P}}
\newcommand{\BE}{\mathbb{E}}
\newcommand{\bsigma}{\bm{\sigma}}
\newcommand{\bw}{\mathbf{w}}
\newcommand{\bv}{\mathbf{v}}
\newcommand{\bx}{\mathbf{x}}
\newcommand{\by}{\mathbf{y}}
\newcommand{\bz}{\mathbf{z}}
\newcommand{\bbeta}{\bm{\beta}}
\newcommand{\bgamma}{\bm{\gamma}}
\newcommand*\diff{\mathop{}\!\mathrm{d}}
\newcommand{\gcnn}{\text{\normalfont G-CNN}}
\newcommand{\fc}{\text{\normalfont fc}}
\newcommand{\steer}{\text{\normalfont steer}}
\newcommand{\aug}{\text{\normalfont aug}}
\newcommand{\range}{\text{\normalfont Range}}
\newcommand{\proj}{\text{\normalfont Proj}}
\newcommand{\Hom}{\text{\normalfont Hom}}
\newcommand{\sign}{\text{\normalfont sign}}
\title{On the Implicit Bias of Linear Equivariant Steerable Networks}
\author{%
  Ziyu Chen \\
  Department of Mathematics and Statistics\\
  University of Massachusetts Amherst\\
  Amherst, MA 01003 \\
  \texttt{ziyuchen@umass.edu} \\
  \And
  Wei Zhu \\
  Department of Mathematics and Statistics\\
  University of Massachusetts Amherst\\
  Amherst, MA 01003 \\
  \texttt{weizhu@umass.edu} \\
}
\begin{document}

\maketitle

\begin{abstract}
  We study the implicit bias of gradient flow on linear equivariant steerable networks in group-invariant binary classification. Our findings reveal that the parameterized predictor converges in direction to the unique group-invariant classifier with a maximum margin defined by the input group action. Under a unitary assumption on the input representation, we establish the equivalence between steerable networks and data augmentation. Furthermore, we demonstrate the improved margin and generalization bound of steerable networks over their non-invariant counterparts.
\end{abstract}


\section{Introduction}
\label{sec:introduction}
Despite recent theoretical breakthroughs in deep learning, it is still largely unknown why overparameterized deep neural networks (DNNs) with infinitely many solutions achieving near-zero training error can effectively generalize on new data. However, the consistently impressive results of DNNs trained with first-order optimization methods, \textit{e.g.}, gradient descent (GD), suggest that the training algorithm is \textit{implicitly guiding} the model towards a solution with strong generalization performance.

Indeed, recent studies have shown that gradient-based training methods effectively regularize the solution by \textit{implicitly minimizing} a certain complexity measure of the model \citep{vardi2022implicit}. For example, \citet{gunasekar2018implicit} showed that in separable binary classification, the linear predictor parameterized by a linear fully-connected network trained under GD converges in the direction of a max-margin support vector machine (SVM), while linear convolutional networks are implicitly regularized by a depth-related bridge penalty in the Fourier domain. \citet{yun2021a} extended this finding to linear tensor networks. \citet{Lyu2020Gradient} and \citet{ji2020directional} established the implicit max-margin regularization for (nonlinear) homogeneous DNNs in the parameter space.

On the other hand, another line of research aims to \textit{explicitly regularize} DNNs through architectural design to exploit the inherent structure of the learning problem.  In recent years, there has been a growing interest in leveraging group symmetry for this purpose, given its prevalence in both scientific and engineering domains. A significant body of literature has been devoted to designing group-equivariant DNNs that ensure outputs transform covariantly to input symmetry transformations. \textit{Group-equivariant steerable networks} represent a general class of symmetry-preserving models that achieve equivariance with respect to any pair of input-output group actions \citep{NEURIPS2019_b9cfe8b6, NEURIPS2019_45d6637b, cohen2017steerable}. Empirical evidence suggests that equivariant steerable networks yield substantial improvements in generalization performance for learning tasks with group symmetry, especially when working with limited amounts of data.

There have been several recent attempts to account for the empirical success of equivariant steerable networks through establishing a tighter upper bound on the test risk for these models. This is typically accomplished by evaluating the complexity measures of equivariant and non-equivariant models under the same norm constraint on the network parameters \citep{sokolic2017generalization, sannai2021improved}.  Nevertheless, it remains unclear whether or why a GD-trained steerable network can achieve a minimizer with a parameter norm comparable to that of its non-equivariant counterpart. Consequently, the effectiveness of such complexity-measure-based arguments to explain the generalization enhancement of steerable networks in group symmetric learning tasks may not be directly applicable.

In light of the above issues, in this work, we aim to fully characterize the implicit bias of the training algorithm on linear equivariant steerable networks in group-invariant binary classification. Our result shows that when trained under gradient flow (GF), \textit{i.e.}, GD with an infinitesimal step size, the steerable-network-parameterized predictor converges in direction to the unique group-invariant classifier attaining a maximum margin with respect to a norm defined by the input group representation. This result has three important implications: under a unitary input group action,
\begin{itemize}
    \item a linear steerable network trained on the \textit{original} data set converge in the same direction as a linear fully-connected network trained on the \textit{group-augmented} data set. This suggests the equivalence between training with linear steerable networks and \textit{data augmentation};
    \item when trained on the same \textit{original} data set, a linear steerable network always attains a wider margin on the \textit{group-augmented} data set compared to a fully-connected network;
    \item when the underlying distribution is group-invariant, a GF-trained linear steerable network achieves a tighter generalization bound compared to its non-equivariant counterpart. This improvement in generalization is not necessarily dependent on the group size, but rather it depends on the support of the invariant distribution.
\end{itemize}

Before we end this section, we note that a similar topic has recently been explored by \citet{lawrence2021implicit} in the context of linear Group Convolutional Neural Networks (G-CNNs), a special case of the equivariant steerable networks considered in this work. However, we point out that the models they studied were not truly group-invariant, and thus their implicit bias result does not explain the improved generalization of G-CNNs. We will further elaborate on the comparison between our work and \citep{lawrence2021implicit} in \cref{sec:related_work}.

\section{Related work}
\label{sec:related_work}

\textbf{Implicit biases}: Recent studies have shown that for linear regression with the logistic or exponential loss on linearly separable data, the linear predictor under GD/SGD converges in direction to the max-$L^2$-margin SVM \citep{soudry2018implicit, nacson2019stochastic, gunasekar2018characterizing}. These results are extended to linear fully-connected networks and linear Convolutional Neural Networks (CNNs) by \citet{gunasekar2018implicit} under the assumption of directional convergence and alignment of the network parameters, which are later proved by \citet{ji2018gradient, ji2019implicit, Lyu2020Gradient, ji2020directional}.  The implicit regularization of gradient flow (GF) is further generalized to linear tensor networks by \citet{yun2021a}. For overparameterized nonlinear networks in the infinite-width regime, rigorous analysis on the optimization of DNNs has also been studied from the neural tangent kernel \citep{jacot2018neural, du2019gradient, allen2019convergence} and the mean-field perspectives \citep{mei2019mean, chizat2018global}. However, these models are not explicitly designed to be group-invariant/equivariant, and the implicit bias of gradient-based methods might guide them to converge to sub-optimal solutions in learning tasks with intrinsic group symmetry.

\textbf{Equivariant neural networks.} Since their introduction by \citet{pmlr-v48-cohenc16}, group equivariant network design has become a burgeoning field with numerous applications from computer vision to scientific computing. Unlike the implicit bias induced by training algorithms, equivariant networks \textit{explicitly} incorporate symmetry priors into model design through either group convolutions \citep{cheng2018rotdcf, weiler2018learning, Sosnovik2020Scale-Equivariant, zhu2022scaling} or, more generally, steerable convolutions \citep{NEURIPS2019_45d6637b, NEURIPS2019_b9cfe8b6,worrall2017harmonic}. Despite their empirical success, it remains largely unknown why and whether equivariant networks trained under gradient-based methods actually converge to solutions with smaller test risk in group-symmetric learning tasks \citep{sokolic2017generalization, sannai2021improved}. In a recent study, \citet{elesedy2021provably} demonstrated a provably strict generalisation benefit for equivariant networks in linear regression. However, the network studied therein is non-equivariant and only symmetrized after training; such test-time data augmentation is different from practical usage of equivariant networks.

\textbf{Comparison to \citet{lawrence2021implicit}.} A recent study by \citet{lawrence2021implicit} also analyzes the implicit bias of linear equivariant G-CNNs, which are a special case of the steerable networks considered in this work. However, the networks studied therein are not actually equivariant/invariant. This is because the inputs they considered are functions on the group, with the group action being the regular representation. In their case, the only group-invariant linear classifier is the group average of the input up to a scalar multiplication, due to the transitivity of the regular representation. Consequently, a fully-connected last layer over the entire group was adopted by \citet{lawrence2021implicit}, resulting in non-invariant classifiers. Therefore, their implicit bias result does not explain the improved generalization of G-CNNs. In contrast, we assume Euclidean inputs with non-transitive group actions, allowing linear steerable networks to learn non-trivial group-invariant models.

\section{Background and problem setup}
\label{sec:background}
We provide a brief background in group theory and group-equivariant steerable networks. We also explain the setup of our learning problem for group-invariant binary classification. 

\subsection{Group and group equivariance}
A \textit{group} is a set $G$ equipped with a binary operator, the group product, satisfying the axioms of associativity, identity, and invertibility. We always assume in this work that $G$ is finite, \textit{i.e.}, $|G|<\infty$, where $|G|$ denotes its cardinality.

Given a vector space $\CX$, let $\text{GL}(\CX)$ be the general linear group of $\CX$ consisting of all invertible linear transformations on $\CX$. A map $\rho:G\to \text{GL}(\CX)$ is called a \textit{group representation} (or \textit{linear group action}) of $G$ on $\CX$ if $\rho$ is a group homomorphism from $G$ to $\text{GL}(\CX)$, namely
\begin{align}
    \rho(gh) = \rho(g)\rho(h)\in \text{GL}(\CX), \quad \forall g, h\in G.
\end{align}
When the representation $\rho$ is clear from the context, we also abbreviate the group action $\rho(g)\bx$ as $g\bx$.

Given a pair of vector spaces $\CX, \CY$ and their respective $G$-representations $\rho_\CX$ and $\rho_\CY$, a linear map $\Psi:\CX\to\CY$ is said to be \textit{$G$-equivariant} if it commutes with the $G$-representations $\rho_\CX$ and $\rho_\CY$, \textit{i.e.},
\begin{align}
\label{eq:def_equivariance}
    \Psi\circ \rho_\CX(g) = \rho_{\CY}(g)\circ \Psi, \quad \forall g\in G.
\end{align}
Linear equivariant maps are also called \textit{intertwiners}, and we denote by $\Hom_G(\rho_\CX, \rho_\CY)$ the space of all intertwiners satisfying \eqref{eq:def_equivariance}. When $\rho_\CY\equiv \text{Id}$ is the trivial representation, then $\Psi[\rho_\CX(g)(\bx)] = \Psi[\bx]$ for all $\bx\in \CX$; namely, $\Psi$ becomes a $G$-invariant linear map.

\subsection{Equivariant steerable networks and G-CNNs}
Let $\CX_0 = \R^{d_0}$ be a $d_0$-dimensional input Euclidean space, equipped with the usual inner product. Let $\rho_0:G\to \text{GL}(\CX_0)$ be a $G$-representation on $\CX_0$. Suppose we have an unknown target function $f^*:\CX_0=\R^{d_0}\to \R$ that is $G$-invariant under $\rho_0$, \textit{i.e.}, $f^*(g\bx)\coloneqq f^*(\rho_0(g)\bx) = f^*(\bx)$ for all $g\in G$ and $\bx\in \CX_0$. The goal of equivariant steerable networks is to approximate $f^*$ using an $L$-layer neural network $f = \Psi_L\circ\Psi_{L-1}\circ \cdots \circ \Psi_1$ that is guaranteed to be also $G$-invariant.

One easily verifies from Eq.~\eqref{eq:def_equivariance} that the composition of equivariant maps is also equivariant. Therefore, for $f = \Psi_L\circ\Psi_{L-1}\circ \cdots \circ \Psi_1$ to be $G$-invariant, it suffices to specify a collection of $G$-representation spaces $\{(\CX_l, \rho_l)\}_{l=1}^L$, with $(\CX_L, \rho_L) = (\R, \text{Id})$ being the trivial representation, such that each layer $\Psi_l\in \Hom_G(\rho_{l-1}, \rho_L): \CX_{l-1}\to \CX_l$ is $G$-equivariant. Equivalently, we want the following diagram to be commutative,
\begin{equation}
    \label{eq:diagram}
        \begin{tikzcd}
        \CX_0\arrow[d, "\rho_0(g)"']\arrow[r, "\Psi_1"] &  \CX_1\arrow[d, "\rho_1(g)"']\arrow[r, "\Psi_2"] & \CX_2\arrow[d, "\rho_2(g)"']\arrow[r, "\Psi_3"]& \cdots\arrow[r, "\Psi_{L-1}"] & \CX_{L-1}\arrow[d, "\rho_{L-1}(g)"'] \arrow[r, "\Psi_{L}"] & \CX_{L}\arrow[d, "\rho_{L}(g)"']\\
        \CX_0\arrow[r, "\Psi_1"] & \CX_1 \arrow[r, "\Psi_2"]& \CX_2 \arrow[r, "\Psi_3"]& \cdots \arrow[r, "\Psi_{L-1}"]& \CX_{L-1} \arrow[r, "\Psi_L"] & \CX_{L}
        \end{tikzcd}
        ~~,\quad \forall g\in G.
\end{equation}
\textbf{Equivariant steerable networks.} Given the representations $\{(\CX_l, \rho_l)\}_{l=0}^L$, a (linear) \textit{steerable network} is constructed as follows. For each $l\in [L]\coloneqq \{1, \cdots, L\}$, we choose a finite collection of $N_l$ (pre-computed) intertwiners $\{\psi_l^j\}_{j=1}^{N_l}\subset \Hom_G(\rho_{l-1}, \rho_l)$. Typically, $\{\psi_l^j\}_{j=1}^{N_l}$ is a basis of $\Hom_G(\rho_{l-1}, \rho_l)$, but it is not necessary in our setting. The $l$-th layer equivariant map $\Psi^{\steer}_l:\CX_{l-1}\to\CX_l$ of a steerable network is then parameterized by
\begin{align}
\label{eq:expansion_general}
    \Psi^{\steer}_l(\bx; \bw_l) = \sum_{j\in [N_l]}w_l^j \psi_l^j(\bx), \quad \forall \bx\in \CX_{l-1},
\end{align}
where the coefficients $\bw_l= [w_l^j]^\top_{j\in [N_l]}\in\R^{N_l}$ are the trainable parameters of the $l$-th layer. An $L$-layer linear steerable network $f_{\steer}(\bx; \BW)$ is then defined as the composition
\begin{align}
\label{eq:def_steerable}
    f_{\steer}(\bx; \BW) = \Psi^{\steer}_L(\cdots \Psi^{\steer}_2(\Psi^{\steer}_1(\bx; \bw_1); \bw_2)\cdots;\bw_L),
\end{align}
where $\BW= [\bw_l]_{l=1}^L\in \prod_{l=1}^L\R^{N_l}\eqqcolon \CW_{\steer}$ is the collection of all trainable parameters. The network $f_{\steer}(\bx; \BW)$ defined in \eqref{eq:def_steerable} is referred to as a \textit{steerable network} because it steers the layer-wise output to transform according to any specified representation.

\textbf{G-CNNs.} A special case of the steerable networks is the \textit{group convolutional neural network} (G-CNN), wherein equivariance is achieved through \textit{group convolutions}. More specifically, for each $l\in [L-1]$, the hidden representation space $(\CX_l, \rho_l)$ is set to
\begin{align}
\label{eq:regular-rep}
    \CX_l = (\R^{d_l})^G = \{\bx_l:G\to\R^{d_l}\}, \quad \rho_l(g)\bx_l(h) \coloneqq \bx_l(g^{-1} h)\in \R^{d_1}, \forall g, h\in G.
\end{align}
The representation $\rho_l\in \text{GL}(\CX_l)$ in \eqref{eq:regular-rep} is known as the \textit{regular representation} of $G$. Intuitively, $\bx_l\in \CX_l$ can be viewed as a matrix of size $d_l\times |G|$, and $\rho_l(g)$ is a permutation of the columns of $\bx_l$. With this choice of $\{(\CX_l, \rho_l)\}_{l=0}^L$, a (linear) G-CNN is built as follows.

\underline{First layer}: the first-layer equivariant map $\Psi^{\gcnn}_1:\CX_0\to\CX_1$ of a G-CNN is defined as
\begin{align}
\label{eq:g-lifting}
    \Psi^{\gcnn}_1(\bx; \bw_1)(g) = \bw_{1}^\top g^{-1}\bx\in\R^{d_1}, \quad \forall \bx\in \R^{d_0},
\end{align}
where $\bw_1 = (w_1^{j,k})_{j,k}\in\R^{d_0\times d_1}$ are the trainable parameters of the first layer. Eq.~\eqref{eq:g-lifting} is called a \textit{$G$-lifting map} as it lifts a Euclidean signal $\bx\in \R^{d_0}$ to a function $\Psi_1^{\gcnn}(\bx; \bw_1)$ on $G$.

\underline{Hidden layers}: For $l\in \{2, \cdots, L-1\}$, define $\Psi^{\gcnn}_l:\CX_{l-1}\to\CX_l$ as the \textit{group convolutions},
\begin{align}
    \Psi^{\gcnn}_l(\bx; \bw_l)(g) = \sum_{h\in G}\bw_l^\top(h^{-1} g) \bx(h)\in\R^{d_l}, \quad \forall \bx\in \CX_{l-1},
\end{align}
where $\bw_l(g)\in\R^{d_{l-1}\times d_l}, \forall g\in G$. Equivalently, $\bw_l\in \R^{d_{l-1}\times d_{l} \times |G|}$ can be viewed as a 3D tensor.

\underline{Last layer}: Define  $\Psi^{\gcnn}_L:\CX_{L-1}\to\CX_L$ as the \textit{group-pooling} followed by a fully-connected layer,
\begin{align}
    \Psi^{\gcnn}_{L}(\bx; \bw_L) = \bw_L^\top \frac{1}{|G|}\sum_{g\in G} \bx(g), \quad \forall \bx\in \CX_{L-1},
\end{align}
where $\bw_L\in \R^{d_{L-1}}$ is the weight of the last layer. An $L$-layer linear G-CNN is then the composition
\begin{align}
\label{eq:def_gcnn}
    f_{\gcnn}(\bx; \BW) = \Psi^{\gcnn}_L(\cdots \Psi^{\gcnn}_2(\Psi^{\gcnn}_1(\bx; \bw_1); \bw_2)\cdots;\bw_L), \quad \BW= [\bw_l]_{l=1}^L,
\end{align}
where $\BW\in \R^{d_0\times d_1}\times \left[\prod_{l=2}^{L-1}\R^{d_{l-1}\times d_{l}\times |G|}\right] \times \R^{d_{L-1}}\eqqcolon \CW_{\gcnn}$ are the trainable weights.

\textbf{Assumptions on the representations.} The input representation $\rho_0\in\text{GL}(\CX_0=\R^{d_0})$ is assumed to be given, and $\rho_L\equiv 1\in \text{GL}(\CX_L = \R)$ is set to the trivial representation for group-invariant outputs. In this work, we make the following special choices for the first-layer representation $(\rho_1, \CX_1)$ as well as the equivariant map $\Psi_1^{\steer}(\bx, \bw_1)$ in a steerable network, while the remaining representations $(\rho_l, \CX_l)$ and steerable maps $\Psi_l^{\steer}(\bx, \bw_l)$, $l\in\{2, \cdots, L-1\}$, can be arbitrary.
\begin{assumption}
\label{assump:first_repre}
    We adopt the regular representation \eqref{eq:regular-rep} for the first layer, and set the first-layer equivariant map  $\Psi_1^{\steer}(\bx; \bw_1)$ to the $G$-lifting map \eqref{eq:g-lifting}.
\end{assumption}

The rationale of \cref{assump:first_repre} is for the network to have enough capacity to parameterize any $G$-invariant linear classifier; this will be further explained  in \cref{prop:invariant_predictor}. Under \cref{assump:first_repre}, we can characterize the linear steerable networks $f_{\steer}(\cdot; \BW)$ in the following proposition.

\begin{proposition}
\label{prop:characterize_predictor}
    Let $f_{\steer}(\bx; \BW)$ be the linear steerable network satisfying \cref{assump:first_repre}, where $\BW = [\bw_l]_{l=1}^L\in \CW_{\steer}$ is the collection of all model parameters. There exists a multi-linear map $M: (\bw_2, \cdots, \bw_L)\mapsto M(\bw_2, \cdots, \bw_L)\in \R^{d_1}$ such that for all $\bx\in\R^{d_0}$ and $\BW\in\CW_{\steer}$,
    \begin{align}
        f_{\steer}(\bx; \BW) = f_{\steer}(\overline{\bx}; \BW) = \left<\overline{\bx}, \bw_1 M(\bw_2, \cdots, \bw_L)\right>,
    \end{align}
    where $\overline{\bx} \coloneqq \frac{1}{|G|}\sum_{g\in G}g \bx$ is the average of all elements on the group orbit of $\bx\in\R^{d_0}$.
\end{proposition}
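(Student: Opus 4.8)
The plan is to unwind the definition of $f_{\steer}$ layer by layer, exploiting the special structure imposed by \cref{assump:first_repre} on the first layer, and to absorb all the higher-layer parameters into a single vector in $\R^{d_1}$. First I would observe that the layers $\Psi^{\steer}_2, \dots, \Psi^{\steer}_L$ are each \emph{linear} in their own parameter block $\bw_l$ by \eqref{eq:expansion_general}, and linear in the incoming activation; hence their composition $\Psi^{\steer}_L \circ \cdots \circ \Psi^{\steer}_2 : \CX_1 \to \R$ is, for fixed $(\bw_2,\dots,\bw_L)$, a linear functional on $\CX_1$, and moreover this functional depends multi-linearly on $(\bw_2,\dots,\bw_L)$. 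Since $\CX_1 = (\R^{d_1})^G \cong \R^{d_1 \times |G|}$, any such linear functional can be written as $\bx_1 \mapsto \sum_{g\in G} \langle \bm{m}_g, \bx_1(g)\rangle$ for some vectors $\bm{m}_g \in \R^{d_1}$ depending multi-linearly on $(\bw_2, \dots, \bw_L)$; I would package these as the sought map $M$ after the next step collapses the $g$-dependence.

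Next I would use $G$-invariance of the composed map. Because every $\Psi^{\steer}_l$ is an intertwiner and $(\CX_L,\rho_L)=(\R,\mathrm{Id})$, the composition $\Psi^{\steer}_L\circ\cdots\circ\Psi^{\steer}_2$ is $G$-invariant as a function of its input in $\CX_1$: it satisfies $\big(\Psi^{\steer}_L\circ\cdots\circ\Psi^{\steer}_2\big)(\rho_1(h)\bx_1) = \big(\Psi^{\steer}_L\circ\cdots\circ\Psi^{\steer}_2\big)(\bx_1)$ for all $h\in G$. Writing the functional as $\sum_{g}\langle \bm m_g, \bx_1(g)\rangle$ and using $\rho_1(h)\bx_1(g) = \bx_1(h^{-1}g)$ from \eqref{eq:regular-rep}, invariance forces $\bm m_g$ to be independent of $g$: all the $\bm m_g$ equal a common vector, which I define to be $M(\bw_2,\dots,\bw_L)\in\R^{d_1}$. (Concretely: averaging over $h$ shows the functional equals $\langle \sum_g \bm m_g,\ \tfrac{1}{|G|}\sum_g \bx_1(g)\rangle$, so only the pooled value of $\bx_1$ matters.) Thus $\big(\Psi^{\steer}_L\circ\cdots\circ\Psi^{\steer}_2\big)(\bx_1) = \big\langle M(\bw_2,\dots,\bw_L),\ \tfrac{1}{|G|}\sum_{g\in G}\bx_1(g)\big\rangle$, and $M$ is multi-linear since the $\bm m_g$ were.

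Finally I would substitute the first layer. By \cref{assump:first_repre}, $\bx_1 = \Psi^{\gcnn}_1(\bx;\bw_1)$ with $\bx_1(g) = \bw_1^\top g^{-1}\bx$, so $\tfrac{1}{|G|}\sum_{g}\bx_1(g) = \bw_1^\top\big(\tfrac{1}{|G|}\sum_g g^{-1}\bx\big) = \bw_1^\top \overline{\bx}$, where I use that $g\mapsto g^{-1}$ is a bijection of $G$ so the orbit average with $g^{-1}$ equals the orbit average $\overline\bx$ with $g$. Plugging in gives $f_{\steer}(\bx;\BW) = \langle M(\bw_2,\dots,\bw_L),\ \bw_1^\top\overline\bx\rangle = \langle \overline\bx,\ \bw_1 M(\bw_2,\dots,\bw_L)\rangle$. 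The same computation with $\overline\bx$ in place of $\bx$ gives $f_{\steer}(\overline\bx;\BW)=\langle \overline{\overline\bx}, \bw_1 M\rangle = \langle\overline\bx,\bw_1 M\rangle$ since $\overline{\overline\bx}=\overline\bx$, establishing all three equalities.

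The main obstacle is the bookkeeping in the first two steps: making precise that the composition of the steerable layers $\Psi^{\steer}_2,\dots,\Psi^{\steer}_L$ — which may be entirely arbitrary intertwiners between arbitrary hidden representations — is genuinely multi-linear in the stacked parameter $(\bw_2,\dots,\bw_L)$ and linear in the $\CX_1$-input, and then extracting the coefficient vectors $\bm m_g$ cleanly. This is conceptually routine given \eqref{eq:expansion_general} (each layer contributes one linear factor in its own $\bw_l$ and one in the activation), but it requires care to phrase without drowning in indices; the $G$-invariance argument that kills the $g$-dependence is then short and is the real content.
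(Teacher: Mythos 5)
Your proof is correct and follows essentially the same route as the paper: both decompose the network into the $G$-lifting first layer followed by a linear functional on $\CX_1=(\R^{d_1})^G$ whose Riesz vector is multi-linear in $(\bw_2,\dots,\bw_L)$, and then collapse the sum over $G$ into the orbit average $\overline{\bx}$. The only immaterial difference is where invariance is invoked --- you use equivariance of the tail $\Psi^{\steer}_L\circ\cdots\circ\Psi^{\steer}_2$ to show the functional factors through group pooling, whereas the paper first replaces $\bx$ by $\overline{\bx}$ via invariance of the full network and then uses $g^{-1}\overline{\bx}=\overline{\bx}$; both yield the same $M$.
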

The proof of \cref{prop:characterize_predictor}, as well as other results, is provided in the appendix. If we define
\begin{align}
\label{eq:p_steerable_w}
    \overline{\CP}_{\steer}(\BW)\coloneqq \bw_1M(\bw_2, \cdots, \bw_L), \quad \CP_{\steer}(\BW)\coloneqq \left(\frac{1}{|G|}\sum_{g\in G}g^\top\right)\overline{\CP}_{\steer}(\BW),
\end{align}
where $g^\top \coloneqq \rho_0(g)^\top$, then
\begin{align}
    f_{\steer}(\bx; \BW) = \left<\bx, \CP_{\steer}(\BW)\right> = \left<\overline{\bx}, \overline{\CP}_{\steer}(\BW)\right>.
\end{align}
By the multi-linearity of $M(\bw_2, \cdots, \bw_L)$, one can verify that $f_{\steer}(\bx; \BW)$, $\CP_{\steer}(\BW)$, and $\overline{\CP}_{\steer}(\BW)$ are all \textit{$L$-homogeneous} in $\BW$; that is, for all $\nu >0$ and $\BW\in \CW_{\steer}$,
\begin{align}
    \label{eq:l-homo}
    f_{\steer}(\bx; \nu\BW) = \nu^L f_{\steer}(\bx; \BW), ~ \CP_{\steer}(\nu \BW) = \nu^L \CP_{\steer}(\BW), ~  \overline{\CP}_{\steer}(\nu \BW) = \nu^L \overline{\CP}_{\steer}(\BW).
\end{align}

\begin{remark}
    As a special case of linear steerable networks, a linear G-CNN $f_{\gcnn}(\bx; \BW)$ \eqref{eq:def_gcnn} can be written as
    \begin{align}
        f_{\gcnn}(\bx; \BW) = \left<\bx, \CP_{\gcnn}(\BW)\right> = \left<\overline{\bx}, \overline{\CP}_{\gcnn}(\BW)\right>,
    \end{align}
    where
    \begin{align}
        \label{eq:p_gcnn_w}
        \overline{\CP}_{\gcnn}(\BW) \coloneqq \bw_1 \left[\prod_{l=2}^{L-1}\left(\sum_{g\in G} \bw_l (g)\right)\right] \bw_L, ~ \CP_{\gcnn}(\BW) & \coloneqq \left[\frac{1}{|G|}\sum_{g\in G} g^\top \right]\overline{\CP}_{\gcnn}(\BW)
    \end{align}
\end{remark}

\begin{remark}
    \label{remark:fully-conected-def}
    In comparison, an $L$-layer linear fully-connected network  
    $f_{\text{\normalfont fc}}(\bx; \BW)$ is given by
    \begin{align}
    \label{eq:p_fc_w}
        f_{\text{\normalfont fc}}(\bx; \BW) = \bw_L^\top \bw_{L-1}^\top\cdots \bw_1^\top  \bx = \left<\bx, \CP_{\fc}(\BW)\right>, \quad \CP_{\text{\normalfont fc}}(\BW
        )\coloneqq \bw_1\cdots \bw_L.
    \end{align}
    where $\BW = [\bw_l]_{l=1}^L\in \CW_{\text{\normalfont fc}}\coloneqq   \left(\prod_{l=1}^{L-1}\R^{d_{l-1}\times d_{l}}\right) \times \R^{d_{L-1}}$. A comparison between \eqref{eq:p_gcnn_w} and \eqref{eq:p_fc_w} reveals that a linear fully-connected network $f_{\fc}(\bx; \BW)$ is a further special case of linear G-CNNs, $f_{\gcnn}(x; \BW)$ \eqref{eq:p_gcnn_w}, with $G=\{e\}$ being the trivial group.
\end{remark}

\subsection{Group-invariant binary classification}
Consider a binary classification data set $S = \{(\bx_i, y_i): i \in [n]\}$, where $\bx_i\in \R^{d_0}$ and $y_i\in\{\pm 1\}, \forall i \in [n]$. We assume that $S$ are i.i.d. samples from a \textit{$G$-invariant distribution} $\CD$ defined below.

\begin{definition}
A distribution $\CD$ over $\R^{d_0}\times \{\pm 1\}$ is said to be $G$-invariant with respect to a representation $\rho_0\in\text{GL}(\R^{d_0})$ if
\begin{align}
    \left(\rho_0(g)\otimes \text{\normalfont Id}\right)_{*}\CD = \CD, ~\forall g\in G,
\end{align}
where $(\rho_0(g)\otimes \text{\normalfont Id})(\bx, y)\coloneqq (\rho_0(g)\bx, y)$, and $\left(\rho_0(g)\otimes \text{\normalfont Id}\right)_{*}\CD \coloneqq \CD\circ \left(\rho_0(g)\otimes \text{\normalfont Id}\right)^{-1}$ is the push-forward measure of $\CD$ under $\rho_0(g)\otimes \text{\normalfont Id}$.
\end{definition}

It is easy to verify that the Bayes optimal classifier $f^*:\R^{d_0}\to\{\pm 1\}$ (the one achieving the smallest population risk) for a $G$-invariant distribution $\CD$ is necessarily a $G$-invariant function, \textit{i.e.}, $f^*(g\bx) = f^*(\bx), \forall g\in G$. Therefore, to learn $f^*$ using (linear) neural networks, it is natural to approximate $f^*$ using an equivariant steerable network
\begin{align}
    f^*(\bx) 
    \approx \sign \left(f_{\steer}(\bx; \BW)\right) = \sign\left(\left<\bx, \CP_{\steer}(\BW)\right>\right).
\end{align}

After choosing the exponential loss $\ell_{\exp}:\R\times\{\pm 1\}\to \R_+,  ~\ell_{\exp}(\hat{y}, y) \coloneqq \exp(-\hat{y}y)$, as a surrogate loss function, the empirical risk minimization over $S$ for the steerable network $f_{\steer}(\bx; \BW)$ becomes
\begin{align}
    \label{eq:steer_erm_parameter}
    \min_{\BW\in \CW_{\steer}} \CL_{\CP_{\steer}}(\BW; S) \coloneqq \sum_{i=1}^n \ell_{\exp}(\left<\bx_i, \CP_{\steer}(\BW)\right>, y_i) = \sum_{i=1}^n \ell_{\exp}(\left<\overline{\bx}_i, \overline{\CP}_{\steer}(\BW)\right>, y_i).
\end{align}
On the other hand, since $\CP_{\steer}(\BW)=\bbeta\in\R^{d_0}$ always corresponds to a \textit{$G$-invariant} linear predictor---we have slightly abused the notation by identifying $\bbeta\in\R^{d_0}$ with the map $\bx\mapsto \left<\bx, \bbeta\right>$---one can alternatively consider the empirical risk minimization directly over the invariant linear predictors $\bbeta$:
\begin{align}
    \label{eq:steer_erm_predictor}
    \min_{\bbeta\in \R^{d_0}_G} \CL(\bbeta; S) \coloneqq \sum_{i=1}^n \ell_{\exp}(\left<\bx_i, \bbeta\right>, y_i),
\end{align}
where $\R_G^{d_0}\subset \R^{d_0}$ is the subspace of all $G$-invariant linear predictors, which is characterized by the following proposition.
\begin{proposition}
\label{prop:invariant_predictor}
    Let $\R_G^{d_0}\subset \R^{d_0}$ be the subspace of $G$-invariant linear predictors, \textit{i.e.}, $\R^{d_0}_G = \left\{\bbeta\in \R^{d_0}: \bbeta^\top \bx = \bbeta^\top g \bx, \forall \bx\in \R^{d_0}, \forall g\in G\right\}$. Then
    \begin{enumerate}[label=(\alph*)]
        \item $\R_{G}^{d_0}$ is characterized by
        \begin{align}
        \label{eq:R_g_characterize}
            \R_G^{d_0} = \bigcap_{g\in G} \ker(I-g^\top) = \text{\normalfont Range}\left(\frac{1}{|G|}\sum_{g\in G} g^\top \right).
        \end{align}
        \item Let $\CA:\R^{d_0}\to \R^{d_0}$ be the group-averaging map,
        \begin{align}
            \label{eq:averaging}
            \CA(\bbeta) \coloneqq \overline{\bbeta} = \frac{1}{|G|}\sum_{g\in G}g\bbeta.
        \end{align}
        Then its adjoint $\CA^\top:\bbeta \mapsto \frac{1}{|G|}\sum_{g\in G}g^\top\bbeta$ is a projection operator from $\R^{d_0}$ to $\R_{G}^{d_0}$. In other words, $\text{\normalfont Range}(\CA^\top) = \R_G^{d_0}$ and $\CA^\top\circ\CA^{\top} = \CA^{\top}$.

        \item If $G$ acts unitarily on $\CX_0$, \textit{i.e.}, $\rho_0(g^{-1}) = \rho_0(g)^\top$, then $\CA = \CA^\top$ is self-adjoint. This implies that $\CA:\bbeta\mapsto \overline{\bbeta}$ is an orthogonal projection from $\R^{d_0}$ onto $\R_{G}^{d_0}$. In particular, we have
        \begin{align}
            \overline{\bbeta} = \bbeta \iff \bbeta\in \R_G^{d_0}, \quad \text{and} ~~\|\overline{\bbeta}\| \le \|\bbeta\|, \forall \bbeta\in \R^{d_0}.
        \end{align}
    \end{enumerate}
\end{proposition}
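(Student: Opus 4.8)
The plan is to prove the three parts in sequence; part~(a) carries the real content, and parts~(b) and~(c) then follow from short functional-analytic arguments.

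\emph{Part (a).} I would unwind the definition: $\bbeta\in\R_G^{d_0}$ iff $\bbeta^\top(I-\rho_0(g))\bx=0$ for all $\bx\in\R^{d_0}$ and $g\in G$, which is exactly $(I-g^\top)\bbeta=0$ for every $g$; intersecting over $g$ yields $\R_G^{d_0}=\bigcap_{g\in G}\ker(I-g^\top)$. For the second equality, set $P\coloneqq\frac1{|G|}\sum_{g}g^\top$ and reindex the sum by $g\mapsto g^{-1}$ so that $P=\frac1{|G|}\sum_g(\rho_0(g)^{-1})^\top$ is recognized as the averaging (Reynolds) operator of the \emph{genuine} representation $g\mapsto(\rho_0(g)^{-1})^\top$ (note $g\mapsto g^\top$ is only an anti-homomorphism, so this passage to inverses is essential). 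The standard computation---using that left and right translation permute $G$---gives $(\rho_0(h)^{-1})^\top P=P=P(\rho_0(h)^{-1})^\top$ for every $h$; averaging over $h$ yields $P^2=P$, and the left identity shows $\range(P)\subseteq\{v:g^\top v=v\ \forall g\}$, with the reverse inclusion immediate since $g^\top v=v\ \forall g$ forces $Pv=v$. Hence $\range(P)=\bigcap_g\ker(I-g^\top)=\R_G^{d_0}$.

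\emph{Part (b).} I would compute the adjoint directly: for all $\bbeta,u\in\R^{d_0}$, $\langle\CA(\bbeta),u\rangle=\frac1{|G|}\sum_g\langle\rho_0(g)\bbeta,u\rangle=\langle\bbeta,\frac1{|G|}\sum_g\rho_0(g)^\top u\rangle$, so $\CA^\top(u)=\frac1{|G|}\sum_g g^\top u=Pu$. Part~(a) then gives $\range(\CA^\top)=\range(P)=\R_G^{d_0}$, and $\CA^\top\circ\CA^\top=P^2=P=\CA^\top$, i.e.\ $\CA^\top$ is a (possibly oblique) projection onto $\R_G^{d_0}$.

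\emph{Part (c).} Under the unitary assumption $\rho_0(g)^\top=\rho_0(g^{-1})$, reindexing once more gives $\CA^\top(u)=\frac1{|G|}\sum_g\rho_0(g^{-1})u=\CA(u)$, so $\CA=\CA^\top$ is self-adjoint; combined with the idempotency from~(b), $\CA$ is the orthogonal projection onto $\range(\CA)=\R_G^{d_0}$. The two displayed consequences are then generic facts about orthogonal projections: the fixed-point set of a projection equals its range, so $\overline\bbeta=\bbeta\iff\bbeta\in\R_G^{d_0}$; and $\|\bbeta\|^2=\|\CA\bbeta\|^2+\|(I-\CA)\bbeta\|^2\ge\|\overline\bbeta\|^2$ by Pythagoras, giving $\|\overline\bbeta\|\le\|\bbeta\|$. \textbf{Main obstacle.} There is no deep difficulty; the one point that demands care is the anti-homomorphism property of transposition, which forces the passage to the dual representation $g\mapsto(\rho_0(g)^{-1})^\top$ before the ``averaging operator is a projection'' lemma applies, and the left-versus-right translation bookkeeping in establishing $P^2=P$ and $\range(P)=\R_G^{d_0}$ is the genuine crux.
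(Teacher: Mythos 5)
Your proposal is correct and follows essentially the same route as the paper's proof: the same unwinding of the definition for the first equality in (a), the same averaging/reindexing argument (using that translation permutes $G$) for the second equality and for idempotency, and the same reindexing $g\mapsto g^{-1}$ plus standard orthogonal-projection facts for (c). Your packaging of the key step as the Reynolds operator of the dual representation $g\mapsto(\rho_0(g)^{-1})^\top$ is a clean way to organize the anti-homomorphism bookkeeping that the paper carries out by direct computation, but the underlying argument is identical.
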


\cref{prop:invariant_predictor} combined with \eqref{eq:p_steerable_w} demonstrates that a linear steerable network $f_{\steer}(\cdot; \BW) = \left<\cdot, \CP_{\steer}(\BW)\right>$ can realize any $G$-invariant linear predictor $\bbeta\in \R^{d_0}_G$; that is, $\{\CP_{\steer}(\BW): \BW\in \CW_{\steer}\} = \R^{d_0}_{G}$. Therefore \eqref{eq:steer_erm_parameter} and \eqref{eq:steer_erm_predictor} are equivalent optimization problems parameterized in different ways. However, minimizing \eqref{eq:steer_erm_parameter} using gradient-based methods may potentially lead to different classifiers compared to those obtained from optimizing \eqref{eq:steer_erm_predictor} directly.

\textbf{Gradient flow.} Given an initialization $\BW(0)\in \CW_{\steer}$, the gradient flow $\{\BW(t)\}_{t\ge 0}$ for  \eqref{eq:steer_erm_parameter} is the solution of the following ordinary differential equation (ODE),
\begin{align}
\label{eq:gd_iterates}
    \frac{\diff \BW}{\diff t} = -\nabla_{\BW} \CL_{\CP_{\steer}}(\BW; S)=- \nabla_{\BW} \left[\sum_{i=1}^n \ell_{\exp}(\left<\bx_i, \CP_{\steer}(\BW)\right>, y_i) \right].
\end{align}
The purpose of this work is to inspect the asymptotic behavior of the $G$-invariant linear predictors $\bbeta_{\steer}(t) = \CP_{\steer}(\BW(t))$ parameterized by the linear steerable network trained under gradient flow \eqref{eq:gd_iterates}. In particular, we aim to analyze the directional limit of $\bbeta_{\steer} (t)$ as $t\to\infty$, \textit{i.e.}, $\lim_{t\to\infty} \frac{\bbeta_{\steer}(t)}{\|\bbeta_{\steer}(t)\|}$. Before ending this section, we make the following assumption on the gradient flow $\BW(t)$ that also appears in the prior works \cite{ji2020directional, Lyu2020Gradient, yun2021a}.
\begin{assumption}
    \label{assump:initial_iterate}
    The gradient flow $\BW(t)$ satisfies $\CL_{\CP_{\steer}}(\BW(t_0); S) < 1$ for some $t_0 > 0$.
\end{assumption}
This assumption implies that the data set $S = \{(\bx_i, y_i): i\in [n]\}$ can be separated by a $G$-invariant linear predictor $\CP_{\steer}(\BW(t_0))$, and our analysis is focused on the ``late phase" of the gradient flow training as $t\to\infty$.

\section{Implicit bias of linear steerable networks}

Our main result on the implicit bias of gradient flow on linear steerable networks in binary classification is summarized in the following theorem.
\begin{theorem}
\label{thm:gcnn_implicit_bias}
Under \cref{assump:first_repre} and \cref{assump:initial_iterate}, let $\bbeta_{\steer}(t) = \CP_{\steer}(\BW(t))$ be the time-evolution of the $G$-invariant linear predictors parameterized by a linear steerable network trained with gradient flow on the data set $S = \{(\bx_i, y_i): i\in [n]\}$; cf.~Eq.~\eqref{eq:gd_iterates}. Then
\begin{enumerate}[label=(\alph*)]
    \item The directional limit  $\bbeta_{\steer}^\infty =\lim_{t\to\infty}\frac{\bbeta_{\steer}(t)}{\|\bbeta_{\steer}(t)\|}$ exists and $\bbeta^\infty_{\steer}\propto \frac{1}{|G|}\sum_{g\in G}g^\top \bgamma^*$, where $\bgamma^*$ is the max-$L^2$-margin SVM solution for the \textbf{transformed} data $\overline{S}=\{(\overline{\bx}_i, y_i): i\in[n]\}$:
    \begin{align}
    \label{eq:main_theorem}
       \bgamma^* = \arg\min_{\bgamma\in \R^{d_0}} \|\bgamma\|^2, \quad \text{\normalfont s.t.}~ y_i\left<\overline{\bx}_i, \bgamma\right> \ge 1, \forall i\in [n].
    \end{align}
    Furthermore, if $G$ acts unitarily on the input space $\CX_0$, \textit{i.e.}, $g^{-1} = g^\top$, then $\bbeta_{\steer}^\infty \propto \bgamma^*$.
    \item Equivalently, $\bbeta^{\infty}_{\steer}$ is proportional to the unique minimizer $\bbeta^{*}$ of the problem
    \begin{align}
    \label{eq:main_theorem_equivalent}
        \bbeta^* = \arg\min_{\bbeta\in \R^{d_0}}  \|\text{\normalfont Proj}_{\text{\normalfont Range}(\CA)}\bbeta\|^2, \quad \text{\normalfont s.t.}~ \bbeta\in \R_G^{d_0}, \text{~and~} y_i\left<\bx_i, \bbeta\right> \ge 1, \forall i\in [n],
    \end{align}
    where $\text{\normalfont Proj}_{\text{\normalfont Range}(\CA)}$ is the projection from $\R^{d_0}$ to $\text{\normalfont Range}(\CA)= \text{\normalfont Range}\left(\frac{1}{|G|}\sum_{g\in G}g\right)$. Moreover, if $G$ acts unitarily on $\CX_0$, then
    \begin{align}
    \label{eq:main_theorem_equivalent_unitary}
        \bbeta^\infty_{\steer} \propto \bbeta^* = \arg\min_{\bbeta\in \R^{d_0}} \|\bbeta\|^2, \quad \text{\normalfont s.t.}~ \bbeta\in \R_G^{d_0}, \text{~and~} y_i\left<\bx_i, \bbeta\right> \ge 1, \forall i\in [n].
    \end{align}
    Namely, $\bbeta_{\steer}^\infty$ achieves the maximum $L^2$-margin among all $G$-invariant linear predictors.
\end{enumerate}

\end{theorem}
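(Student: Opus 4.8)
The plan is to marry the general implicit-bias theory for homogeneous models (\citet{Lyu2020Gradient,ji2020directional}, in the spirit of \citet{gunasekar2018implicit,yun2021a}) with the algebraic structure of the steerable parameterization from \cref{prop:characterize_predictor}; I will establish part (a) and then deduce (b) by a short convexity/projection argument using \cref{prop:invariant_predictor}. First I would reduce to a homogeneous model on the transformed data $\overline S=\{(\overline\bx_i,y_i):i\in[n]\}$: by \cref{prop:characterize_predictor}, $f_\steer(\bx;\BW)=\langle\overline\bx,\overline\CP_\steer(\BW)\rangle$ with $\overline\CP_\steer(\BW)=\bw_1 M(\bw_2,\dots,\bw_L)$, so $\CL_{\CP_\steer}(\BW;S)$ from \eqref{eq:steer_erm_parameter} is exactly the exponential-loss empirical risk of the $L$-homogeneous polynomial model $\BW\mapsto\overline\CP_\steer(\BW)$ evaluated on $\overline S$. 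Being a polynomial in $\BW$, $f_\steer$ is locally Lipschitz and $L$-homogeneous, and \cref{assump:initial_iterate} forces $\overline S$ to be linearly separable, so the hypotheses of the homogeneous implicit-bias results are met.

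Second, I would invoke the directional-convergence theorem of \citet{ji2020directional}: $\CL_{\CP_\steer}(\BW(t);S)\to0$, $\|\BW(t)\|\to\infty$, and $\BW(t)/\|\BW(t)\|$ converges to a point $\widehat\BW$ that is a KKT point of the parameter-space margin problem $\min_\BW\tfrac12\|\BW\|^2$ subject to $y_if_\steer(\bx_i;\BW)\ge1$ for all $i$. Since $\overline\CP_\steer$ and $\CP_\steer=\CA^\top\overline\CP_\steer$ are continuous and $L$-homogeneous, $\bbeta_\steer(t)/\|\bbeta_\steer(t)\|$ then converges to the direction of $\CP_\steer(\widehat\BW)$, which is nonzero because primal feasibility gives $y_i\langle\bx_i,\CP_\steer(\widehat\BW)\rangle=y_if_\steer(\bx_i;\widehat\BW)\ge1>0$.

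The crux, and the step I expect to be the main obstacle, is to show that this KKT point of the \emph{non-convex} parameter problem induces precisely the \emph{$L^2$}-max-margin separator of $\overline S$ (rather than the differently-normalized separators that arise for, e.g., linear CNNs), and this is exactly where \cref{assump:first_repre} enters: because the first layer is the \emph{full, unconstrained} $G$-lifting matrix $\bw_1\in\R^{d_0\times d_1}$, one has $\nabla_{\bw_1}f_\steer(\bx_i;\BW)=\overline\bx_i\bv^\top$ with $\bv=M(\bw_2,\dots,\bw_L)$, so the $\bw_1$-block of the KKT stationarity condition reads $\widehat\bw_1=\big(\sum_i\lambda_iy_i\overline\bx_i\big)\widehat\bv^\top$ for multipliers $\lambda_i\ge0$; hence $\widehat\bw_1$ is rank one and $\overline\CP_\steer(\widehat\BW)=\widehat\bw_1\widehat\bv=\|\widehat\bv\|^2\sum_i\lambda_iy_i\overline\bx_i$. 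Combined with primal feasibility and complementary slackness for $f_\steer(\bx_i;\widehat\BW)=\langle\overline\bx_i,\overline\CP_\steer(\widehat\BW)\rangle$, the vector $\overline\CP_\steer(\widehat\BW)$ satisfies exactly the KKT system of the \emph{convex} SVM problem \eqref{eq:main_theorem}, so by convexity it coincides with the unique solution $\bgamma^*$; in particular $\bgamma^*\in\mathrm{span}\{\overline\bx_i\}$. Therefore $\bbeta_\steer^\infty\propto\CP_\steer(\widehat\BW)=\CA^\top\bgamma^*=\frac1{|G|}\sum_{g\in G}g^\top\bgamma^*$, and when $\rho_0$ is unitary, $\CA^\top=\CA$ fixes each $\overline\bx_i$ and hence $\bgamma^*$, giving $\bbeta_\steer^\infty\propto\bgamma^*$.

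Finally, for part (b): for $\bbeta\in\R_G^{d_0}$ one has $\langle\bx_i,\bbeta\rangle=\langle\overline\bx_i,\bbeta\rangle$, and since the orthogonal projection onto $\range(\CA)\supseteq\mathrm{span}\{\overline\bx_i\}$ is self-adjoint and fixes each $\overline\bx_i$, also $\langle\overline\bx_i,\proj_{\range(\CA)}\bbeta\rangle=\langle\overline\bx_i,\bbeta\rangle$. Hence $\bbeta\mapsto\proj_{\range(\CA)}\bbeta$ sends every feasible point of \eqref{eq:main_theorem_equivalent} to a feasible point of \eqref{eq:main_theorem} with the same objective value $\|\proj_{\range(\CA)}\bbeta\|^2$, so the optimal value of \eqref{eq:main_theorem_equivalent} is at least $\|\bgamma^*\|^2$; and $\CA^\top\bgamma^*$ is feasible for \eqref{eq:main_theorem_equivalent} with $\proj_{\range(\CA)}\CA^\top\bgamma^*=\bgamma^*$ (using $\CA^2=\CA$), attaining this bound. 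This identifies $\CA^\top\bgamma^*$ as the minimizer of \eqref{eq:main_theorem_equivalent}, with uniqueness following from $\range(\CA^\top)\cap\ker(\CA^\top)=\{0\}$ (idempotency of $\CA^\top$, \cref{prop:invariant_predictor}). The unitary case \eqref{eq:main_theorem_equivalent_unitary} is then immediate, since there $\proj_{\range(\CA)}$ restricts to the identity on $\R_G^{d_0}$.
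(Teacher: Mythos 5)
Your proposal is correct and follows essentially the same route as the paper: part (a) hinges on exactly the same step, namely extracting the $\bw_1$-block of the KKT stationarity for the parameter-space margin problem of \citet{Lyu2020Gradient,ji2020directional} to show $\overline{\CP}_{\steer}(\widehat{\BW}) = \|\widehat{\bv}\|^2\sum_i\lambda_i y_i\overline{\bx}_i$ and hence that it solves the convex SVM \eqref{eq:main_theorem} on $\overline{S}$ (the paper just makes explicit that $\|\widehat{\bv}\|^2>0$ follows from primal feasibility before rescaling the multipliers). For part (b) you replace the paper's proof by contradiction with a direct value-matching argument under $\proj_{\range(\CA)}$, but this rests on the same two facts the paper isolates as its Claims 1 and 2 ($\proj_{\range(\CA)}$ preserves feasibility/objective, and $\proj_{\range(\CA)}\CA^\top\bgamma^*=\bgamma^*$ via idempotency of $\CA^\top$ and $\bgamma^*\in\range(\CA)$), so the content is the same.
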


\cref{thm:gcnn_implicit_bias} suggests that gradient flow implicitly guides a linear steerable network toward the unique $G$-invariant classifier with a maximum margin defined by the input representation $\rho_0$.

\begin{remark}
    \label{remark:fully-conected-bias}
    According to 
    \cref{remark:fully-conected-def}, if $G =\{e\}$ is a trivial group, then a linear G-CNN $f_{\gcnn}(\bx; \BW)$ \eqref{eq:def_gcnn} (which is a special case of linear steerable networks) reduces to a fully-connected network $f_{\fc}(\bx; \BW)$. Since the representation of a trivial group is always unitary, we have the following corollary which also appeared in \cite{ji2020directional} and \cite{yun2021a}.
\end{remark}

\begin{corollary}
\label{cor:fully-connected-implicit}
    Let $\{\BW(t)\}_{t\ge 0}\subset \CW_\text{\normalfont fc}$ be the gradient flow of the parameters when training a linear fully-connected network on the data set $S = \{(\bx_i, y_i), i\in[n]\}$, \text{i.e.},
    \begin{align}
    \label{eq:gd_iterates_fc}
        \frac{\diff \BW}{\diff t}
        = -\nabla_\BW \CL_{\CP_{\fc}}(\BW; S) 
        \coloneqq - \nabla_{\BW} \left[\sum_{i=1}^n \ell_{\exp}(\left<\bx_i, \CP_{\text{\normalfont fc}}(\BW)\right>, y_i) \right].
    \end{align}
    Then the classifier $\bbeta_{\fc}(t) = \CP_{\text{\normalfont fc}}(\BW(t))$ converges in a direction that aligns with the max-$L^2$-margin SVM solution $\bgamma^*$ for the \textbf{original} data set $S = \{(\bx_i, y_i): i\in [n]\}$,
    \begin{align}
        \bgamma^* = \arg\min_{\bgamma\in \R^{d_0}}\|\bgamma\|^2, \quad \text{\normalfont s.t.}~ y_i\left<\bx_i, \bgamma\right> \ge 1, \forall i\in [n].
    \end{align}
\end{corollary}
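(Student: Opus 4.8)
The plan is to obtain \cref{cor:fully-connected-implicit} as the special case $G=\{e\}$ of \cref{thm:gcnn_implicit_bias}, using the identification already flagged in \cref{remark:fully-conected-def,remark:fully-conected-bias}. First I would make the reduction fully explicit: when $G=\{e\}$, the regular representation \eqref{eq:regular-rep} on $(\R^{d_l})^{\{e\}}\cong\R^{d_l}$ is trivial, the $G$-lifting map \eqref{eq:g-lifting} becomes $\Psi_1^{\gcnn}(\bx;\bw_1)=\bw_1^\top\bx$, the group convolutions collapse to ordinary matrix products, and the group-pooling last layer is just $\bw_L^\top$. Comparing \eqref{eq:p_gcnn_w} with \eqref{eq:p_fc_w} and using $\tfrac{1}{|G|}\sum_{g\in G}g^\top=\text{Id}$ gives $\CP_{\gcnn}(\BW)=\bw_1\cdots\bw_L=\CP_{\fc}(\BW)$, so a linear fully-connected network is literally a linear G-CNN (hence a linear steerable network) with $G=\{e\}$, and the gradient flow \eqref{eq:gd_iterates_fc} is exactly \eqref{eq:gd_iterates} for this trivial group.

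Next I would verify the hypotheses of \cref{thm:gcnn_implicit_bias}. \cref{assump:first_repre} holds vacuously, since for the trivial group the prescribed first-layer representation and $G$-lifting map are simply a generic linear first layer. \cref{assump:initial_iterate}, which requires the flow to reach training loss below $1$ at some finite time, is precisely the statement that $S$ is linearly separable (as assumed in \citet{soudry2018implicit, ji2020directional, yun2021a}); I would either fold this separability condition into the corollary's statement or note that it implies \cref{assump:initial_iterate} here.

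Finally I would read off the conclusion. For $G=\{e\}$ one has $\overline{\bx}_i=\bx_i$, so $\overline{S}=S$; the trivial representation is unitary ($g^\top=g^{-1}$ trivially), so the unitary branch of \cref{thm:gcnn_implicit_bias}(a) applies and yields $\bbeta_{\fc}^\infty=\lim_{t\to\infty}\bbeta_{\fc}(t)/\|\bbeta_{\fc}(t)\|\propto\bgamma^*$, where $\bgamma^*$ is the max-$L^2$-margin SVM solution for $S$. Equivalently, since $\R_{\{e\}}^{d_0}=\R^{d_0}$ and $\proj_{\range(\CA)}=\text{Id}$, part (b) collapses to the same $\bgamma^*$. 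There is essentially no mathematical obstacle beyond bookkeeping: the one thing to be careful about is checking that every object appearing in \cref{thm:gcnn_implicit_bias} (the representations, the steerable maps, the multilinear map $M$, the operators $\CA,\CA^\top$, and the flow itself) really does degenerate consistently when $G=\{e\}$, so that the theorem can be invoked verbatim; this also recovers the classical results of \citet{soudry2018implicit, gunasekar2018implicit, ji2020directional, yun2021a}.
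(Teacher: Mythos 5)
Your proposal is correct and is exactly the route the paper takes: \cref{remark:fully-conected-bias} justifies \cref{cor:fully-connected-implicit} precisely by specializing \cref{thm:gcnn_implicit_bias} to the trivial group $G=\{e\}$, where the representation is unitary, $\overline{\bx}_i=\bx_i$, and $\overline{S}=S$, so no separate proof is given in the appendix. Your added care about \cref{assump:initial_iterate} (i.e., that linear separability of $S$ is implicitly required) is a reasonable bookkeeping point that the paper leaves tacit.
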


\section{The equivalence between steerable networks and data augmentation}
\label{sec:data_augmentation}

Compared to hard-wiring symmetry priors into model architectures through equivariant steerable networks, an alternative approach to incorporate symmetry into the learning process is by training a non-equivariant model with the aid of \textit{data augmentation}. In this section, we demonstrate that these two approaches are equivalent for binary classification under a unitary assumption for $\rho_0$.

\begin{corollary}
\label{cor:augmentation}
    Let $\bbeta_{\steer}^\infty = \lim_{t\to\infty} \frac{\bbeta_{\steer}(t)}{\|\bbeta_{\steer}(t)\|}$ be the directional limit of the linear predictor $\bbeta_{\steer}(t) = \CP_{\steer}(\BW(t))$ parameterized by a linear steerable network trained using gradient flow on the \textbf{original} data set $S = \{(\bx_i, y_i), i\in[n]\}$. Correspondingly, let $\bbeta_{\fc}^\infty = \lim_{t\to\infty} \frac{\bbeta_{\fc}(t)}{\|\bbeta_{\fc} (t)\|}$,  $\bbeta_{\fc}(t) = \CP_{\fc}(\BW(t))$ \eqref{eq:p_fc_w}, be that of a linear fully-connected network trained on the \textbf{augmented} data set $S_{\aug}=\{(g\bx_i,y_i), i\in [n], g\in G\}$. If $G$ acts unitarily on $\CX_0$, then
    \begin{align}
    \bbeta^\infty_{\steer} = \bbeta^\infty_{\fc}.
    \end{align}
    In other words, the effect of using a linear steerable network for group-invariant binary classification is exactly the same as conducting \textbf{data-augmentation} for non-invariant models.
\end{corollary}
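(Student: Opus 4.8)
The plan is to express both directional limits as solutions of support vector machine (SVM) programs using the results already in hand, and then to prove that these two programs have the same minimizer. By \cref{cor:fully-connected-implicit} applied to the fully-connected network trained on $S_{\aug}$, the limit $\bbeta_{\fc}^\infty$ is the unit vector in the direction of $\bgamma_{\aug}^\star := \arg\min_{\bgamma\in\R^{d_0}}\|\bgamma\|^2$ subject to $y_i\langle \rho_0(g)\bx_i,\bgamma\rangle \ge 1$ for all $i\in[n]$ and all $g\in G$. By \cref{thm:gcnn_implicit_bias}(b) in the unitary case, i.e.\ \eqref{eq:main_theorem_equivalent_unitary}, the limit $\bbeta_{\steer}^\infty$ is the unit vector in the direction of $\bbeta^\star := \arg\min_{\bbeta\in\R^{d_0}}\|\bbeta\|^2$ subject to $\bbeta\in\R_G^{d_0}$ and $y_i\langle\bx_i,\bbeta\rangle\ge 1$ for all $i\in[n]$. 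So the whole statement reduces to showing $\bgamma_{\aug}^\star = \bbeta^\star$. (A preliminary point to dispatch: to legitimately invoke \cref{cor:fully-connected-implicit} one must check that $S_{\aug}$ is linearly separable and that gradient flow on it eventually enters the regime of \cref{assump:initial_iterate}; both follow because, by \cref{assump:initial_iterate}, $S$ is separated by some $G$-invariant predictor $\bbeta$, and any such $\bbeta$ satisfies $y_i\langle\rho_0(g)\bx_i,\bbeta\rangle = y_i\langle\bx_i,\bbeta\rangle$, hence separates $S_{\aug}$ too.)

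The crux — and the step I expect to be the main obstacle — is to show that $\bgamma_{\aug}^\star$ is itself $G$-invariant, i.e.\ $\bgamma_{\aug}^\star\in\R_G^{d_0}$. The feasible set $C := \{\bgamma\in\R^{d_0} : y_i\langle\rho_0(g)\bx_i,\bgamma\rangle\ge 1,\ \forall i,\ \forall g\}$ is closed and convex, and under the unitary assumption it is invariant under $\bgamma\mapsto\rho_0(h)\bgamma$: for $h\in G$, $\langle\rho_0(g)\bx_i,\rho_0(h)\bgamma\rangle = \langle\rho_0(h)^\top\rho_0(g)\bx_i,\bgamma\rangle = \langle\rho_0(h^{-1}g)\bx_i,\bgamma\rangle$, and since $g\mapsto h^{-1}g$ is a bijection of $G$, the constraint system defining $C$ is preserved. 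Consequently the orbit average $\CA(\bgamma_{\aug}^\star) = \frac{1}{|G|}\sum_{h\in G}\rho_0(h)\bgamma_{\aug}^\star$ also lies in $C$ by convexity. By \cref{prop:invariant_predictor}(c), under unitarity $\CA$ is the orthogonal projection onto $\R_G^{d_0}$, so $\|\CA(\bgamma_{\aug}^\star)\| \le \|\bgamma_{\aug}^\star\|$ with equality iff $\bgamma_{\aug}^\star\in\R_G^{d_0}$. Since $\|\cdot\|^2$ is strongly convex and $C$ is convex, the SVM minimizer is unique; comparing $\bgamma_{\aug}^\star$ against the feasible point $\CA(\bgamma_{\aug}^\star)$ then forces $\bgamma_{\aug}^\star = \CA(\bgamma_{\aug}^\star)\in\R_G^{d_0}$.

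It then remains to match the two programs. For any $\bbeta\in\R_G^{d_0}$ and any $g\in G$, the definition of $\R_G^{d_0}$ gives $\langle\rho_0(g)\bx_i,\bbeta\rangle = \langle\bx_i,\bbeta\rangle$, so on the subspace $\R_G^{d_0}$ the augmented constraints $\{y_i\langle\rho_0(g)\bx_i,\bbeta\rangle\ge 1\}_{i,g}$ collapse exactly to $\{y_i\langle\bx_i,\bbeta\rangle\ge 1\}_i$. Therefore $\bgamma_{\aug}^\star$, which we have just shown lies in $\R_G^{d_0}$ and which minimizes $\|\bgamma\|^2$ over all of $\R^{d_0}$ subject to the augmented constraints, in particular minimizes $\|\bgamma\|^2$ over $\bgamma\in\R_G^{d_0}$ subject to $y_i\langle\bx_i,\bgamma\rangle\ge 1$; by uniqueness this means $\bgamma_{\aug}^\star = \bbeta^\star$. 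Hence $\bbeta_{\fc}^\infty = \bgamma_{\aug}^\star/\|\bgamma_{\aug}^\star\| = \bbeta^\star/\|\bbeta^\star\| = \bbeta_{\steer}^\infty$, which is the desired equality. (As a sanity check one may note this is also consistent with \cref{thm:gcnn_implicit_bias}(a): under unitarity $\overline\bx_i = \CA\bx_i\in\R_G^{d_0}$, so the max-margin SVM for $\overline S$ lands in $\R_G^{d_0}$ and solves the same invariance-constrained program.)
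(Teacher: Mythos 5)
Your proof is correct, but it takes a genuinely different route from the paper's. The paper works with part (a) of \cref{thm:gcnn_implicit_bias}: it takes the KKT certificate $(\alpha_i)_{i}$ for the max-margin problem on the transformed data $\overline S$ and explicitly constructs dual variables $\tilde\alpha_{i,g}=\alpha_i/|G|$ certifying that the same $\bgamma^*$ solves the augmented SVM, deducing the $G$-invariance of $\bgamma^*$ from the stationarity equation $\bgamma^*=\sum_i\alpha_i y_i\overline\bx_i$. You instead start from part (b) (the invariance-constrained program \eqref{eq:main_theorem_equivalent_unitary}) and give a purely primal symmetrization argument: under unitarity the feasible set of the augmented SVM is invariant under the group action, so by convexity the orbit average $\CA(\bgamma_{\aug}^\star)$ is feasible, and since $\CA$ is an orthogonal projection (\cref{prop:invariant_predictor}(c)) the unique minimizer must be a fixed point of $\CA$; the augmented constraints then collapse on $\R_G^{d_0}$ and the two programs coincide. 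Your argument is more conceptual and avoids dual variables entirely, and it isolates exactly where unitarity enters (invariance of the feasible set and orthogonality of $\CA$); the paper's dual-certificate computation is what carries over more directly to the non-unitary setting of \cref{cor:new_inner_product}. Your preliminary remark on the separability of $S_{\aug}$ is a point the paper glosses over, and your reduction is otherwise complete.
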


\begin{remark}
    For \cref{cor:augmentation} to hold, it is crucial that $\rho_0\in \text{\normalfont GL}(\CX_0)$ is unitary, as otherwise the limit direction $\bbeta^\infty_{\fc}$ of a linear fully-connected network trained on the augmented data set is generally not $G$-invariant (and hence cannot be equal to $\bbeta^\infty_{\steer}\in \R^{d_0}_G$); see \cref{ex:bias_not_the_same}.
\end{remark}

\begin{example}
\label{ex:bias_not_the_same}
\normalfont
Let $G = \mathbb{Z}_2 = \{\overline{0}, \overline{1}\}$. Consider a (non-unitary) $G$-representation $\rho_0$ on $\CX_0 = \R^2$,
\begin{align}
    \rho_0\left(\overline{0}\right) = 
    \begin{bmatrix}
    1 & 0\\
    0 & 1
    \end{bmatrix},
    \quad \rho_0\left(\overline{1}\right) =
    \begin{bmatrix}
        1 & 0\\
        -1 & 1
    \end{bmatrix}
    \begin{bmatrix}
        -1 & 0 \\
        0 & 1\\
    \end{bmatrix}
    \begin{bmatrix}
        1 & 0 \\
        1 & 1
    \end{bmatrix}
    =
    \begin{bmatrix}
    -1 & 0\\
    2 & 1
    \end{bmatrix}.
\end{align}
Let $S = \{(\bx, y)\}= \{( (1, 2)^\top , +1)\}$ be a training set with only one point. By \cref{thm:gcnn_implicit_bias}, the limit direction $\bbeta_{\steer}^\infty$ of a linear-steerable-network-parameterized linear predictor satisfies
\begin{align}
    \bbeta_{\steer}^\infty \propto \frac{1}{2}\sum_{g\in G}g^\top \bgamma^* =
    \begin{bmatrix}
    0 & 1\\
    0 & 1
    \end{bmatrix} \bgamma^* =
    \begin{bmatrix}
    0 & 1\\
    0 & 1
    \end{bmatrix}
    \begin{bmatrix}
    0\\
    \frac{1}{3}
    \end{bmatrix}
    = 
    \begin{bmatrix}
        \frac{1}{3}\\
        \frac{1}{3}
    \end{bmatrix},
\end{align}
where $\bgamma^* = (0, \frac{1}{3})^\top$ is the max-margin SVM solution for the transformed data set $\overline{S} = \{(\overline{\bx}, y)\} = \{((0, 3)^\top, +1)\}$. On the contrary, by \cref{cor:fully-connected-implicit}, the limit direction $\bbeta_{\fc}^\infty$ of a linear fully-connected network trained on the augmented data set $S_{\text{aug}} = \{( (1, 2)^\top , +1), ( (-1, 4)^\top , +1)\}$ aligns with the max-margin SVM solution $\bgamma_{\text{aug}}^*$ for $S_\text{aug}$,
\begin{align}
    \bbeta_{\fc}^\infty \propto \bgamma_{\text{aug}}^* & = \arg\min_{\bgamma\in \R^{2}} \|\bgamma\|^2, \quad\text{s.t.}~ y\left<\bx, \bgamma\right>\ge 1, \forall (\bx, y)\in S_{\text{aug}}\\
    & = (0.2, 0.4)^\top \not\propto \bbeta_{\steer}^\infty.
\end{align}
\end{example}

However, we demonstrate below that the equivalence between linear steerable networks and data augmentation can be re-established for non-unitary $\rho_0$ by defining a new inner product on $\R^{d_0}$,
\begin{align}
\label{eq:unitarian-trick}
    \left<\bv, \bw\right>_{\rho_0} \coloneqq \bv^\top \left(\frac{1}{|G|}\sum_{g\in G}\rho_0(g)^\top \rho_0(g)\right)^{1/2} \bw, \quad \forall \bv, \bw\in \CX_0= \R^{d_0}.
\end{align}
When $\rho_0$ is unitary, $\left<\bv, \bw\right>_{\rho_0} = \left<\bv, \bw\right>$ is the normal Euclidean inner product. With this new inner product, we modify the linear fully-connected network and its empirical loss on the augmented data set $S_{\aug} = \{(g \bx_i, y_i): g\in G, i\in [n]\}$ as
\begin{align}
    & f_{\fc}^{\rho_0}(\bx; \BW) \coloneqq \left<\bx, \CP_{\fc}(\BW)\right>_{\rho_0},\\
    \label{eq:modified_fc_loss}
    & \CL_{\CP_{\fc}}^{\rho_0}(\BW; S_{\aug}) \coloneqq\sum_{g\in G}\sum_{i=1}^n\ell_{\exp}\left(f_{\fc}^{\rho_0}(g \bx_i; \BW) , y_i\right)
    =\sum_{g\in G}\sum_{i=1}^n\ell_{\exp}\left(\left<g \bx_i, \CP_{\fc}(\BW)\right>_{\rho_0}, y_i\right).
\end{align}
The following corollary shows that, for non-unitary $\rho_0$ on $\CX_0$, the implicit bias of a linear steerable network is again the same as that of a modified linear fully-connected network $f_{\fc}^{\rho_0}(\bx; \BW)$ trained under data augmentation.

\begin{corollary}
\label{cor:new_inner_product}
    Let $\bbeta_{\steer}^\infty$ be the same as that in \cref{cor:augmentation}. Let $\bbeta_{\fc}^{\rho_0, \infty} = \lim_{t\to\infty}\frac{\bbeta_{\fc}^{\rho_0}(t)}{\|\bbeta_{\fc}^{\rho_0}(t)\|}$ be the limit direction of $\bbeta_{\fc}^{\rho_0}(t) = \CP_{\fc}(\BW(t))$  under the gradient flow of the modified empirical loss $\CL_{\CP_{\fc}}^{\rho_0}(\BW; S_{\aug})$ \eqref{eq:modified_fc_loss} for a linear fully-connected network on the \textbf{augmented} data set $S_{\aug}$. Then
    \begin{align}
        \bbeta_{\steer}^\infty \propto \left(\frac{1}{|G|}\sum_{g\in G}\rho_0(g)^\top \rho_0(g)\right)^{1/2} \bbeta_{\fc}^{\rho_0, \infty}.
    \end{align}
    Consequently, we have $\left<\bx, \bbeta_{\steer}^\infty\right> \propto \left<\bx, \bbeta_{\fc}^{\rho_0, \infty}\right>_{\rho_0}$ for all $\bx\in \R^{d_0}$.
\end{corollary}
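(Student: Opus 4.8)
The plan is to recognize the modified augmented training problem as an \emph{ordinary} linear fully-connected problem on a linearly reparameterized data set, apply \cref{cor:fully-connected-implicit}, and then match the resulting support-vector solution against the one in \cref{thm:gcnn_implicit_bias}(a). Throughout, set $Q \coloneqq \frac{1}{|G|}\sum_{g\in G}\rho_0(g)^\top\rho_0(g)$, which is symmetric positive definite, so $Q^{1/2}$ is well defined, symmetric, and invertible, and $\left<\bv,\bw\right>_{\rho_0} = \left<Q^{1/2}\bv,\bw\right> = \left<\bv,Q^{1/2}\bw\right>$. \emph{Step 1 (reduction).} Since $Q^{1/2}$ is symmetric, $\left<g\bx_i,\CP_{\fc}(\BW)\right>_{\rho_0} = \left<Q^{1/2}g\bx_i,\CP_{\fc}(\BW)\right>$, so as a function of $\BW$ the modified loss $\CL_{\CP_{\fc}}^{\rho_0}(\BW;S_{\aug})$ is literally the ordinary fully-connected loss $\CL_{\CP_{\fc}}(\BW;S')$ on the data set $S' \coloneqq \{(Q^{1/2}g\bx_i,y_i): g\in G,\ i\in[n]\}$; hence the two gradient flows coincide. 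Because \cref{assump:initial_iterate} provides a $G$-invariant predictor $\bbeta_0$ separating $S$, the vector $Q^{-1/2}\bbeta_0$ separates $S'$, so \cref{cor:fully-connected-implicit} applies and $\bbeta_{\fc}^{\rho_0}(t)=\CP_{\fc}(\BW(t))$ converges in direction to the max-$L^2$-margin SVM $\bgamma'$ of $S'$.

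\emph{Step 2 (the heart: $\bgamma' = Q^{1/2}\bgamma^*$), where $\bgamma^*$ is the SVM of $\overline{S}=\{(\overline{\bx}_i,y_i)\}$ from \cref{thm:gcnn_implicit_bias}.} I would use two elementary identities obtained by re-indexing the group sum: (i) $\CA(\rho_0(g)\bx_i) = \frac{1}{|G|}\sum_h \rho_0(hg)\bx_i = \overline{\bx}_i$ for all $g,i$; and (ii) $\rho_0(g)\overline{\bx}_j = \overline{\bx}_j$ for all $g,j$, hence $Q\bv = \frac{1}{|G|}\sum_g\rho_0(g)^\top\bv = \CA^\top\bv$ for any $\bv$ fixed by all $\rho_0(g)$ --- in particular for each $\overline{\bx}_j$ and, via the KKT representation $\bgamma^* = \sum_j\lambda_j y_j\overline{\bx}_j$ with $\lambda_j\ge 0$, for $\bgamma^*$ itself. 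The key computation is then that, for all $g,i$,
\[
y_i\left<Q^{1/2}g\bx_i,\ Q^{1/2}\bgamma^*\right> = y_i\left<\rho_0(g)\bx_i,\ \CA^\top\bgamma^*\right> = y_i\left<\CA\rho_0(g)\bx_i,\ \bgamma^*\right> = y_i\left<\overline{\bx}_i,\ \bgamma^*\right> \ge 1,
\]
with equality exactly when the $i$-th constraint of the $\overline{S}$-SVM is active, while $Q^{1/2}\bgamma^* = \sum_{i,h}\frac{\lambda_i}{|G|}\,y_i\,(Q^{1/2}\rho_0(h)\bx_i)$ exhibits $Q^{1/2}\bgamma^*$ as a nonnegative combination of the feature vectors of $S'$ supported on active constraints. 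Thus $Q^{1/2}\bgamma^*$ is primal-feasible and satisfies the KKT system of the strictly convex $S'$-SVM, so by uniqueness $\bgamma' = Q^{1/2}\bgamma^*$.

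\emph{Step 3 (conclude) and the main obstacle.} By \cref{thm:gcnn_implicit_bias}(a) together with identity (ii) applied to $\bgamma^*$, we get $\bbeta_{\steer}^\infty \propto \frac{1}{|G|}\sum_g \rho_0(g)^\top\bgamma^* = \CA^\top\bgamma^* = Q\bgamma^*$; combining with Steps 1--2, $\bbeta_{\steer}^\infty \propto Q\bgamma^* = Q^{1/2}(Q^{1/2}\bgamma^*) = Q^{1/2}\bgamma' \propto Q^{1/2}\bbeta_{\fc}^{\rho_0,\infty}$, and pairing both sides with an arbitrary $\bx$ and using $\left<\bx,Q^{1/2}\bw\right> = \left<\bx,\bw\right>_{\rho_0}$ gives $\left<\bx,\bbeta_{\steer}^\infty\right> \propto \left<\bx,\bbeta_{\fc}^{\rho_0,\infty}\right>_{\rho_0}$. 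I expect Step 2 to be the main obstacle: one must show that $Q^{1/2}\bgamma^*$ is \emph{exactly} the SVM of the transformed augmented set $S'$. For unitary $\rho_0$ this would follow from a global orthogonal change of variables, but in general $Q^{1/2}$ is only an isometry on the subspace spanned by the orbit averages $\{\overline{\bx}_i\}$ --- which happens to contain $\bgamma^*$ by its KKT representation --- so the matching has to be done at the level of the KKT conditions using the group-averaging identities (i)--(ii) rather than geometrically. (Alternatively one may reach the same conclusion by noting that $\tilde\rho_0(g)\coloneqq Q^{1/2}\rho_0(g)Q^{-1/2}$ is a \emph{unitary} representation conjugate to $\rho_0$ and invoking \cref{cor:augmentation}, but the nontrivial content remains the SVM-matching of Step 2.)
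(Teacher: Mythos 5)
Your proposal is correct and follows essentially the same route as the paper: rewrite the modified loss as an ordinary fully-connected loss on the transformed augmented set $\{(Q^{1/2}g\bx_i,y_i)\}$, invoke \cref{cor:fully-connected-implicit}, and verify the KKT system of that SVM against the one for $\overline{S}$ using the group-averaging identities. Your closed form $Q^{1/2}\bgamma^*$ for the transformed-set SVM coincides with the paper's $A^{-1}\bigl(\frac{1}{|G|}\sum_g g^\top\bgamma^*\bigr)$ precisely because $\bgamma^*$ is fixed by every $\rho_0(g)$ (your identity (ii)), so $\CA^\top\bgamma^* = Q\bgamma^*$.
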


\section{Improved margin and generalization}
\label{sec:margin_generalization}
We demonstrate in this section the improved margin and generalization of linear steerable networks over their non-invariant counterparts. In what follows, we assume $\rho_0$ to be unitary.

The following theorem shows that the margin of a linear-steerable-network-parameterized predictor $\bbeta_{\steer}^\infty$ on the augmented data set $S_{\aug}$ is always larger than that of a linear fully-connected network $\bbeta^\infty_{\fc}$, suggesting improved $L^2$-robustness of the steerable-network-parameterized classifier.

\begin{theorem}
    \label{thm:margin_compare}
    Let $\bbeta_{\steer}^\infty$ be the directional limit of a linear-steerable-network-parameterized predictor trained on the \textbf{original} data set $S = \{(\bx_i, y_i), i\in[n]\}$; let $\bbeta_{\fc}^\infty$ be that of a linear fully-connected network \textbf{also} trained on the same data set $S$. Let $M_{\steer}$ and $M_{\fc}$, respectively, be the (signed) margin of $\bbeta^\infty_{\steer}$ and $\bbeta^\infty_{\fc}$ on the \textbf{augmented} data set $S_\aug = \{(g\bx_i, y_i): i\in [n], g\in G\}$, \textit{i.e.},
    \begin{align}
        M_{\steer} \coloneqq \min_{i\in [n], g\in G}y_i \left<\bbeta^\infty_{\steer}, g\bx_i\right>, \quad M_{\fc} \coloneqq \min_{i\in [n], g\in G}y_i \left<\bbeta^\infty_{\fc}, g\bx_i\right>.
    \end{align}
    Then we always have $M_{\steer}\ge M_{\fc}$.
\end{theorem}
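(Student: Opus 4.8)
The plan is to recognize $M_{\steer}$ as the \emph{optimal} hard-margin value of the augmented data set $S_{\aug}$, and then to observe that $M_{\fc}$ is, by definition, merely the margin achieved on $S_{\aug}$ by one \emph{specific} unit vector $\bbeta_{\fc}^\infty$; the inequality $M_{\fc}\le M_{\steer}$ is then automatic. So the real content is to justify that first identification.

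First I would record two elementary facts. (i) Since $\rho_0$ is unitary, any $G$-invariant predictor $\bbeta\in\R_G^{d_0}$ satisfies $\langle g\bx_i,\bbeta\rangle=\langle \bx_i,\bbeta\rangle$ for all $i,g$; hence a $G$-invariant $\bbeta$ has the same margin on $S$ and on $S_{\aug}$, and in particular \cref{assump:initial_iterate} (which exhibits a $G$-invariant separator of $S$) guarantees that $S_{\aug}$ is linearly separable, so all margins below are well defined. (ii) (Hard-margin SVM duality.) For any linearly separable set $\{(\bz_j,y_j)\}_j$, if $\bw^\star$ is its minimum-norm separator $\arg\min\{\|\bw\|^2: y_j\langle\bz_j,\bw\rangle\ge 1\}$, then $\min_j y_j\langle\bz_j,\bw^\star\rangle=1$ (rescaling $\bw^\star$ downward would otherwise contradict minimality), so $\bw^\star/\|\bw^\star\|$ is the maximum-margin unit direction with margin exactly $1/\|\bw^\star\|$; any competing unit vector $\bu$ with margin $m>0$ yields a feasible $\bu/m$, forcing $m\le 1/\|\bw^\star\|$.

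Next I would identify $\bbeta_{\steer}^\infty$ with the maximum-margin unit direction of $S_{\aug}$. By \cref{thm:gcnn_implicit_bias}(b) in the unitary case, $\bbeta_{\steer}^\infty$ is the unit vector proportional to $\bbeta^\ast=\arg\min\{\|\bbeta\|^2:\bbeta\in\R_G^{d_0},\ y_i\langle\bx_i,\bbeta\rangle\ge 1\ \forall i\}$. The claim is that $\bbeta^\ast$ is also the (unconstrained) minimum-norm separator of $S_{\aug}$. Feasibility for $S_{\aug}$ is fact (i). For optimality, given any $\bgamma$ feasible for $S_{\aug}$, its group average $\overline{\bgamma}=\CA(\bgamma)$ lies in $\R_G^{d_0}$, satisfies $\|\overline{\bgamma}\|\le\|\bgamma\|$ by \cref{prop:invariant_predictor}(c) (unitarity makes $\CA$ an orthogonal projection onto $\R_G^{d_0}$), and is feasible for the problem defining $\bbeta^\ast$ since $y_i\langle\bx_i,\overline{\bgamma}\rangle=\frac{1}{|G|}\sum_{g}y_i\langle g^{-1}\bx_i,\bgamma\rangle\ge 1$; hence $\|\bbeta^\ast\|\le\|\overline{\bgamma}\|\le\|\bgamma\|$. (A shortcut avoiding this computation: combine \cref{cor:augmentation} with \cref{cor:fully-connected-implicit} applied to the data set $S_{\aug}$, which says directly that $\bbeta_{\steer}^\infty$ points along the max-margin SVM solution for $S_{\aug}$.) Fact (ii) applied to $S_{\aug}$ then gives $M_{\steer}=1/\|\bbeta^\ast\|=\max_{\|\bu\|=1}\min_{i,g}y_i\langle g\bx_i,\bu\rangle$, whereas $M_{\fc}=\min_{i,g}y_i\langle g\bx_i,\bbeta_{\fc}^\infty\rangle$ is the margin of the unit vector $\bbeta_{\fc}^\infty$ on $S_{\aug}$, so $M_{\fc}\le M_{\steer}$.

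The only place requiring any care — the ``main obstacle,'' though it is mild — is this middle step: showing that imposing $G$-invariance on the separator of $S$ (as \cref{thm:gcnn_implicit_bias}(b) does) yields exactly the unconstrained minimum-norm separator of $S_{\aug}$. This uses unitarity of $\rho_0$ twice, so that group averaging is norm non-increasing and so that averaging preserves the separation constraints; without unitarity the clean identification above breaks down (cf.\ the non-invariance phenomenon in \cref{ex:bias_not_the_same}).
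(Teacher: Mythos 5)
Your proof is correct and rests on the same pillars as the paper's: both identify $\bbeta^\infty_{\steer}$ (via \cref{cor:augmentation} together with \cref{cor:fully-connected-implicit}) with the direction of the minimum-norm separator of $S_{\aug}$, hence with the maximum-margin unit direction for $S_{\aug}$, so that $M_{\steer}=1/\|\bgamma^*_{\steer}\|$. The difference lies in the final comparison and is in your favor: where the paper runs a three-case analysis on the value of $\min_{i\in[n],g\in G}y_i\langle\bgamma^*_{\fc},g\bx_i\rangle$ (rescaling $\bgamma^*_{\fc}$ when that value lies in $(0,1)$), you simply note that $M_{\fc}$ is the $S_{\aug}$-margin of \emph{some} unit vector and is therefore bounded by the optimal margin $M_{\steer}$ — the case distinction is absorbed into your general ``fact (ii)'', and it becomes transparent that the characterization of $\bbeta^\infty_{\fc}$ as the max-margin direction for $S$ is never actually needed. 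Your self-contained justification of the identification (orthogonal group-averaging of any feasible $\bgamma$ for $S_{\aug}$, using unitarity both for $\|\overline{\bgamma}\|\le\|\bgamma\|$ and for preservation of the constraints) is also sound and gives a primal alternative to the KKT verification in the proof of \cref{cor:augmentation}.
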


Finally, we aim to quantify the improved generalization  of linear steerable networks compared to fully-connected networks in binary classification of \textit{linearly separable} group-invariant distributions defined below.
\begin{definition}
    A distribution $\CD$ on $\R^{d_0}\times\{\pm 1\}$ is called linearly separable if there exists $\bbeta\in \R^{d_0}$ such that
    \begin{align}
    \label{eq:linearly_separable}
        \BP_{(\bx, y)\sim \CD}\left[y\left<\bx, \bbeta\right>\ge 1\right]= 1.
    \end{align}
\end{definition}

It is easy to verify (by \cref{lemma:invariant_separable}) that if $\CD$ is $G$-invariant and linearly separable, then $\CD$ can be separated by a $G$-invariant linear classifier.
The following theorem establishes the generalization bound of linear steerable networks in separable group-invariant binary classification.
\begin{theorem}
    \label{thm:generalization}
    Let $\CD$ be a $G$-invariant distribution over $\R^{d_0}\times \{\pm 1\}$ that is linearly separable by an invariant classifier $\bbeta_0\in\R_{G}^{d_0}$. Define
    \begin{align}
        \overline{R} = \inf\left\{r> 0: \|\overline{\bx}\|\le r ~\text{\normalfont with probability 1}\right\}.
    \end{align}
    Let $S = \left\{(\bx_i, y_i)\right\}_{i=1}^n$ be i.i.d. samples from $\CD$, and let $\bbeta_{\steer}^\infty$ be the limit direction of a steerable-network-parameterized linear predictor trained using gradient flow on $S$. Then, for any $\delta>0$, we have with probability at least $1-\delta$ (over random samples $S\sim\CD^n$) that
    \begin{align}
    \label{eq:generalization_steerable}
        \BP_{(\bx, y)\sim\CD}\left[y\neq \sign \left(\left<\bx, \bbeta^\infty_{\steer}\right>\right)\right] \le \frac{2\overline{R}\|\bbeta_0\|}{\sqrt{n}} + \sqrt{\frac{\log(1/\delta)}{2n}}.
    \end{align}
\end{theorem}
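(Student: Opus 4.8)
The plan is to reduce the generalization statement to a standard margin-based Rademacher bound for linear predictors, applied to the \emph{transformed} data $\overline{S} = \{(\overline{\bx}_i, y_i)\}$ and the function class of $G$-invariant linear predictors. First I would invoke \cref{thm:gcnn_implicit_bias}(b), which (in the unitary case) identifies $\bbeta_{\steer}^\infty$ up to positive scaling with $\bbeta^* = \arg\min\{\|\bbeta\|^2 : \bbeta\in\R_G^{d_0},\ y_i\langle \bx_i,\bbeta\rangle \ge 1\}$. Since $\bbeta_0\in\R_G^{d_0}$ separates $\CD$ with margin $1$, in particular $y_i\langle\bx_i,\bbeta_0\rangle\ge 1$ for all the samples in $S$, so $\bbeta_0$ is feasible for that program; hence $\|\bbeta_{\steer}^\infty\text{-normalized representative }\bbeta^*\| \le \|\bbeta_0\|$. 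The direction $\sign(\langle \bx,\bbeta_{\steer}^\infty\rangle)$ is scale-invariant, so I may as well work with $\bbeta^*$ itself.

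Next I would rewrite the relevant inner product in the averaged form. Because $\bbeta^*\in\R_G^{d_0}$, we have $\langle \bx,\bbeta^*\rangle = \langle \overline{\bx},\bbeta^*\rangle$ for every $\bx$ (this is the defining property of $\R_G^{d_0}$, cf.\ \cref{prop:invariant_predictor}; equivalently use that $\CA$ is the orthogonal projection onto $\R_G^{d_0}$ in the unitary case and $\langle\bx,\bbeta^*\rangle=\langle\bx,\CA\bbeta^*\rangle=\langle\CA^\top\bx,\bbeta^*\rangle=\langle\overline\bx,\bbeta^*\rangle$). Therefore the $0$--$1$ error of $\bbeta^*$ on $\CD$ equals the $0$--$1$ error of the linear predictor $\bbeta^*$ evaluated on the pushforward distribution $\overline{\CD}$ of $(\overline{\bx},y)$, and on that distribution $\|\overline{\bx}\|\le \overline{R}$ almost surely by the definition of $\overline{R}$, while $\bbeta^*$ has $\overline S$-margin at least $1$ (since $y_i\langle\overline\bx_i,\bbeta^*\rangle = y_i\langle\bx_i,\bbeta^*\rangle\ge1$). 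Also $\|\bbeta^*\|\le\|\bbeta_0\|$ from the previous paragraph. So the situation is exactly: a unit-margin linear classifier of norm $\le\|\bbeta_0\|$ over data of norm $\le\overline R$, fit on $n$ i.i.d.\ samples.

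Then I would apply the textbook margin generalization bound: for the class $\{\bx\mapsto\langle\bx,\bbeta\rangle : \|\bbeta\|\le\|\bbeta_0\|\}$ over inputs with $\|\cdot\|\le\overline R$, the empirical Rademacher complexity on $n$ points is at most $\overline R\|\bbeta_0\|/\sqrt n$; composing with the ramp/hinge surrogate at margin $1$ (which is $1$-Lipschitz) and using the fact that $\bbeta^*$ has zero empirical ramp loss at margin $1$, the standard Rademacher-based margin bound (e.g., Bartlett--Mendelson / Mohri--Rostamizadeh--Talwalkar, Theorem 4.x) gives $\BP_{\overline\CD}[y\neq\sign\langle\overline\bx,\bbeta^*\rangle] \le 2\overline R\|\bbeta_0\|/\sqrt n + \sqrt{\log(1/\delta)/(2n)}$ with probability $\ge 1-\delta$; the factor $2$ from the contraction lemma and the additive $\sqrt{\log(1/\delta)/(2n)}$ from McDiarmid are exactly as in the claimed bound. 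Translating back via $\langle\bx,\bbeta^*\rangle=\langle\overline\bx,\bbeta^*\rangle$ and $\bbeta_{\steer}^\infty\propto\bbeta^*$ yields \eqref{eq:generalization_steerable}.

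The main obstacle — really the only nontrivial point beyond bookkeeping — is establishing the feasibility/norm-comparison step rigorously: one must be careful that $\bbeta_0$ separates the \emph{sampled} data (which follows from \eqref{eq:linearly_separable} holding with probability $1$, so the event $y_i\langle\bx_i,\bbeta_0\rangle\ge1$ for all $i$ has probability $1$) and that the minimum-norm program defining $\bbeta^*$ is over $\R_G^{d_0}$, into which $\bbeta_0$ lies by hypothesis, so that $\|\bbeta^*\|\le\|\bbeta_0\|$ is legitimate. After that, everything else is a direct citation of the standard linear-predictor margin bound applied to the transformed sample.
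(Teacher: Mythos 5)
Your proposal is correct and follows essentially the same route as the paper: identify $\bbeta^*=\tau\bbeta^\infty_{\steer}$ via \cref{thm:gcnn_implicit_bias}(b), use feasibility of $\bbeta_0$ to get $\|\bbeta^*\|\le\|\bbeta_0\|$, and run the standard ramp-loss/Rademacher margin bound, with the key identity $\left<\bx,\bbeta\right>=\left<\overline{\bx},\bbeta\right>$ for invariant $\bbeta$ supplying the $\overline{R}$ in place of $R$. The only cosmetic difference is that the paper keeps the hypothesis class restricted to $\R_G^{d_0}$ and uses this identity inside the Rademacher computation (\cref{lemma:complexity_invariant_classifier}), whereas you push the data forward to $\overline{\bx}$ and use the full norm ball; both yield the same bound.
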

\begin{remark}
    In comparison, let $\bbeta_{\fc}^\infty$ be the limit direction of a fully-connected-network-parameterized linear predictor trained on $S$. Then with probability at least $1-\delta$, we have
    \begin{align}
    \label{eq:generalization_fc}
        \BP_{(\bx, y)\sim\CD}\left[y\neq \sign \left(\left<\bx, \bbeta^\infty_{\fc}\right>\right)\right] \le \frac{2R\|\bbeta_0\|}{\sqrt{n}} + \sqrt{\frac{\log(1/\delta)}{2n}},
    \end{align}
    where $R = \inf\left\{r> 0: \|\bx\|\le r ~\text{\normalfont with probability 1}\right\}$. This is the classical generalization result for max-margin SVM (see, \textit{e.g.}, \cite{shalev2014understanding}.) Eq.~\eqref{eq:generalization_fc} can also be viewed as a special case of Eq.~\eqref{eq:generalization_steerable}, as a fully-connected network is a G-CNN with $G=\{e\}$ (cf.~\cref{remark:fully-conected-bias}), and therefore $\overline{\bx}=\bx$ and $R = \overline{R}$.
    
    By \cref{prop:invariant_predictor}, the map $\bx\to\overline{\bx}$ is an orthogonal projection, and thus we always have $\overline{R}\le R$. Therefore the generalization bound for steerable networks in \eqref{eq:generalization_steerable} is always smaller than that of the fully-connected network in \eqref{eq:generalization_fc}.
\end{remark}

\begin{remark}
    A comparison between Eq.~\eqref{eq:generalization_steerable} and Eq.~\eqref{eq:generalization_fc} reveals that the improved generalization of linear steerable network does not necessarily depend on the group size $|G|$. Instead, it depends on how far the distribution $\CD$'s support is from the subspace $\R^{d_0}_G$, such that $\overline{R}$ could be much smaller than $R$. In fact, if the support of $\CD$ is contained in $R_{G}^{d_0}$, then $\overline{R} = R$, and the steerable network does not achieve any generalization gain. This is consistent with \cref{thm:gcnn_implicit_bias}, as in this case, the transformed data set $\overline{S} = \{(\overline{\bx}_i, y_i): i\in [n]\}$ is the same as the original data set $S$.
\end{remark}

\section{Conclusion and future work}
In this work, we analyzed the implicit bias of gradient flow on general linear group-equivariant steerable networks in  group-invariant binary classification. Our findings indicate that the parameterized predictor converges in a direction that aligns with the unique group-invariant classifier with a maximum margin that is dependent on the input representation. As a corollary of our main result, we established the equivalence between data augmentation and learning with steerable networks in our setting. Finally, we demonstrated that linear steerable networks outperform their non-invariant counterparts in terms of improved margin and generalization bound.

A limitation of our result is that the implicit bias of gradient flow studied herein holds in an asymptotic sense, and the convergence rate to the directional limit might be extremely slow. This is consistent with the findings in, \textit{e.g.}, \citep{soudry2018implicit, yun2021a}. Understanding the behavior of gradient flow in a non-asymptotic regime is an important direction for future work. Moreover, in our current setting, we assume the first layer equivariant map is given by the $G$-lifting map so that the linear steerable network has enough capacity to parameterize all $G$-invariant linear classifiers. It would be interesting to analyze the implicit bias of steerable networks after removing this assumption.

\section*{Acknowledgements}
The research was  partially supported by NSF under DMS-2052525 and DMS-2140982.

\bibliography{mybib}

\newpage
\appendix
\section{Proofs in \cref{sec:background}}

\begin{repproposition}{prop:characterize_predictor}
    Let $f_{\steer}(\bx; \BW)$ be the linear steerable network satisfying \cref{assump:first_repre}, where $\BW = [\bw_l]_{l=1}^L\in \CW_{\steer}$ is the collection of all model parameters. There exists a multi-linear map $M: (\bw_2, \cdots, \bw_L)\mapsto M(\bw_2, \cdots, \bw_L)\in \R^{d_1}$ such that for all $\bx\in\R^{d_0}$ and $\BW\in\CW_{\steer}$,
    \begin{align}
        f_{\steer}(\bx; \BW) = f_{\steer}(\overline{\bx}; \BW) = \left<\overline{\bx}, \bw_1 M(\bw_2, \cdots, \bw_L)\right>,
    \end{align}
    where $\overline{\bx} \coloneqq \frac{1}{|G|}\sum_{g\in G}g \bx$ is the average of all elements on the group orbit of $\bx\in\R^{d_0}$.
\end{repproposition}

\begin{proof}
    Since the linear steerable network $f_{\steer}(\bx; \BW)$ is $G$-invariant, we have $f_{\steer}(\bx; \bw) = f_{\steer}(g\bx; \bw)$ for all $g\in G$. Therefore,
    \begin{align}
        f_{\steer}(\bx; \BW) = \frac{1}{|G|}\sum_{g\in G}f_{\steer}(g\bx; \BW) = f_{\steer}\left(\frac{1}{|G|}\sum_{g\in G} g\bx; \bw\right) = f_{\steer}(\overline{\bx}; \BW),
    \end{align}
    where the second equality is due to the linearity of $f_{\steer}(\bx; \bw)$ in $\bx$. We thus have
    \begin{align}
        f_{\steer}(\bx; \BW) = f_{\steer}(\overline{\bx}; \BW) & = \Psi^{\steer}_L(\cdots \Psi^{\steer}_2(\Psi^{\steer}_1(\overline{\bx}; \bw_1); \bw_2)\cdots;\bw_L) \\
        & = \left<\Psi_1^{\steer}(\overline{\bx}; \bw), \Phi(\bw_2, \cdots, \bw_L)\right>,
    \end{align}
    where $\Phi(\bw_2, \cdots, \bw_L)\in \CX_1 = \left(\R^{d_1}\right)^G$ is multi-linear with respect to $(\bw_1, \cdots, \bw_L)$. Under \cref{assump:first_repre}, we have
    \begin{align}
        f_{\steer}(\bx; \BW)
        & = \sum_{g\in G}\left< \Psi_1^{\steer}(\overline{\bx}; \bw)(g), \Phi(\bw_2, \cdots, \bw_L)(g)\right>\\
        & = \sum_{g\in G}\left<\bw_1^\top g^{-1}\overline{\bx}, \Phi(\bw_2, \cdots, \bw_L)(g)\right>\\
        & = \sum_{g\in G}\left<\bw_1^\top \overline{\bx}, \Phi(\bw_2, \cdots, \bw_L)(g)\right>\\
        & = \left< \overline{\bx}, \bw_1\left[\sum_{g\in G}\Phi(\bw_2, \cdots, \bw_L)(g)\right]\right>\\
        & = \left< \overline{\bx}, \bw_1M(\bw_2, \cdots, \bw_L)\right>,
    \end{align}
    where $M(\bw_2, \cdots, \bw_L) \coloneqq \sum_{g\in G}\Phi(\bw_2, \cdots, \bw_L)(g)$ is also multi-linear in $(\bw_2, \cdots, \bw_L)$.
\end{proof}

\begin{repproposition}{prop:invariant_predictor}
Let $\R_G^{d_0}\subset \R^{d_0}$ be the subspace of $G$-invariant linear predictors, \textit{i.e.}, $\R^{d_0}_G = \left\{\bbeta\in \R^{d_0}: \bbeta^\top \bx = \bbeta^\top g \bx, \forall \bx\in \R^{d_0}, \forall g\in G\right\}$. Then
    \begin{enumerate}[label=(\alph*)]
        \item $\R_{G}^{d_0}$ is characterized by
        \begin{align}
        \label{eq:R_g_characterize_appendix}
            \R_G^{d_0} = \bigcap_{g\in G} \ker(I-g^\top) = \text{\normalfont Range}\left(\frac{1}{|G|}\sum_{g\in G} g^\top \right).
        \end{align}
        \item Let $\CA:\R^{d_0}\to \R^{d_0}$ be the group-averaging map,
        \begin{align}
            \label{eq:averaging_appendix}
            \CA(\bbeta) \coloneqq \overline{\bbeta} = \frac{1}{|G|}\sum_{g\in G}g\bbeta.
        \end{align}
        Then its adjoint $\CA^\top:\bbeta \mapsto \frac{1}{|G|}\sum_{g\in G}g^\top\bbeta$ is a projection operator from $\R^{d_0}$ to $\R_{G}^{d_0}$. In other words, $\text{\normalfont Range}(\CA^\top) = \R_G^{d_0}$ and $\CA^\top\circ\CA^{\top} = \CA^{\top}$.

        \item If $G$ acts unitarily on $\CX_0$, \textit{i.e.}, $\rho_0(g^{-1}) = \rho_0(g)^\top$, then $\CA = \CA^\top$ is self-adjoint. This implies that $\CA:\bbeta\mapsto \overline{\bbeta}$ is an orthogonal projection from $\R^{d_0}$ onto $\R_{G}^{d_0}$. In particular, we have
        \begin{align}
            \overline{\bbeta} = \bbeta \iff \bbeta\in \R_G^{d_0}, \quad \text{and} ~~\|\overline{\bbeta}\| \le \|\bbeta\|, \forall \bbeta\in \R^{d_0}.
        \end{align}
    \end{enumerate}
\end{repproposition}

\begin{proof}
To prove (a), for the first equality, a vector $\bbeta \in \R_G^{d_0}$ if and only if $0=\bbeta^\top (I-g)\bx = \left<(I-g)^\top \bbeta,  \bx\right>, \forall \bx\in \R^{d_0}, \forall g\in G$. This is equivalent to $\bbeta\in \bigcap_{g\in G} \ker(I-g^\top)$, and therefore $\R_G^{d_0} = \bigcap_{g\in G} \ker(I-g^\top)$.

For the second equality, if $\bbeta\in \bigcap_{g\in G} \ker(I-g^\top)$, then $ (I-g^\top) \bbeta = 0, \forall g\in G$. Hence
\begin{align}
\label{eq:proof_proposition_a}
    & 0 = \frac{1}{|G|}\sum_{g\in G}(I-g^\top )\bbeta = \bbeta - \frac{1}{|G|}\sum_{g\in G}g^\top \bbeta \\
    \implies & \bbeta = \frac{1}{|G|}\sum_{g\in G}g^\top \bbeta \in \range\left(\frac{1}{|G|}\sum_{g\in G} g^\top \right)
\end{align}
On the other hand, if $\bbeta = \frac{1}{|G|}\sum_{h\in G} h^\top  \by$ for some $\by\in \R^{d_0}$, then for all $g\in G$,
\begin{align}
    \left(I-g^\top\right)\bbeta = \left(I-g^\top\right)  \frac{1}{|G|}\sum_{h\in G} h^\top  \by =  \frac{1}{|G|}\sum_{h\in G} h^\top  \by -  \frac{1}{|G|}\sum_{h\in G} (hg)^\top  \by = 0.
\end{align}
Thus $\bbeta \in \bigcap_{g\in G} \ker(I-g^\top)$.

Point (b) can be easily derived from \eqref{eq:R_g_characterize_appendix} and \eqref{eq:proof_proposition_a}.

To prove (c), notice that
\begin{align}
    \CA^\top = \frac{1}{|G|}\sum_{g\in G}g^\top = \frac{1}{|G|}\sum_{g\in G}g^{-1} = \frac{1}{|G|}\sum_{g\in G}g = \CA.
\end{align}
Hence $\CA = \CA^\top$ is self-adjoint. This combined with point (b) implies that $\CA = \CA^\top:\bbeta\mapsto\overline{\bbeta}$ is an orthogonal projection from $\R^{d_0}$ onto $\R^{d_0}_G$.
\end{proof}

\section{Proof of \cref{thm:gcnn_implicit_bias}}

\begin{reptheorem}{thm:gcnn_implicit_bias}
Under \cref{assump:first_repre} and \cref{assump:initial_iterate}, let $\bbeta_{\steer}(t) = \CP_{\steer}(\BW(t))$ be the time-evolution of the $G$-invariant linear predictors parameterized by a linear steerable network trained with gradient flow on the data set $S = \{(\bx_i, y_i): i\in [n]\}$; cf.~Eq.~\eqref{eq:gd_iterates}. Then
\begin{enumerate}[label=(\alph*)]
    \item The directional limit  $\bbeta_{\steer}^\infty =\lim_{t\to\infty}\frac{\bbeta_{\steer}(t)}{\|\bbeta_{\steer}(t)\|}$ exists and $\bbeta^\infty_{\steer}\propto \frac{1}{|G|}\sum_{g\in G}g^\top \bgamma^*$, where $\bgamma^*$ is the max-$L^2$-margin SVM solution for the \textbf{transformed} data $\overline{S}=\{(\overline{\bx}_i, y_i): i\in[n]\}$:
    \begin{align}
    \label{eq:main_theorem_appendix}
       \bgamma^* = \arg\min_{\bgamma\in \R^{d_0}} \|\bgamma\|^2, \quad \text{\normalfont s.t.}~ y_i\left<\overline{\bx}_i, \bgamma\right> \ge 1, \forall i\in [n].
    \end{align}
    Furthermore, if $G$ acts unitarily on the input space $\CX_0$, \textit{i.e.}, $g^{-1} = g^\top$, then $\bbeta_{\steer}^\infty \propto \bgamma^*$.
    \item Equivalently, $\bbeta^{\infty}_{\steer}$ is proportional to the unique minimizer $\bbeta^{*}$ of the problem
    \begin{align}
    \label{eq:main_theorem_equivalent_appendix}
        \bbeta^* = \arg\min_{\bbeta\in \R^{d_0}}  \|\text{\normalfont Proj}_{\text{\normalfont Range}(\CA)}\bbeta\|^2, \quad \text{\normalfont s.t.}~ \bbeta\in \R_G^{d_0}, \text{~and~} y_i\left<\bx_i, \bbeta\right> \ge 1, \forall i\in [n],
    \end{align}
    where $\text{\normalfont Proj}_{\text{\normalfont Range}(\CA)}$ is the projection from $\R^{d_0}$ to $\text{\normalfont Range}(\CA)= \text{\normalfont Range}\left(\frac{1}{|G|}\sum_{g\in G}g\right)$. Moreover, if $G$ acts unitarily on $\CX_0$, then
    \begin{align}
    \label{eq:main_theorem_equivalent_unitary_appendix}
        \bbeta^\infty_{\steer} \propto \bbeta^* = \arg\min_{\bbeta\in \R^{d_0}} \|\bbeta\|^2, \quad \text{\normalfont s.t.}~ \bbeta\in \R_G^{d_0}, \text{~and~} y_i\left<\bx_i, \bbeta\right> \ge 1, \forall i\in [n].
    \end{align}
    Namely, $\bbeta_{\steer}^\infty$ achieves the maximum $L^2$-margin among all $G$-invariant linear predictors.
\end{enumerate}

\end{reptheorem}

Before proving \cref{thm:gcnn_implicit_bias}, we need the following lemma which holds for general $L$-homogeneous networks, of which linear steerable networks are a special case. Note that in what follows, $\|\cdot\|$ always denotes the Euclidean norm of a tensor viewed as a one-dimensional vector.

\begin{lemma}[paraphrased from \cite{Lyu2020Gradient} and \cite{ji2020directional}] 
\label{lemma:param_converge_align}
    Under \cref{assump:initial_iterate}, we have the following results of directional convergence and alignment.
    \begin{enumerate}[label=(\alph*)]
    \item $\CL_{\CP_{\steer}}(\BW(t); S)\to 0$, as $t\to\infty$. Consequently, $\|\BW(t)\|\to \infty$ and $\|\CP_{\steer}(\BW(t))\|\to \infty$.
    \item The directional convergence and alignment of the parameters $\BW(t)$ and the gradients $\nabla_{\BW}\CL_{\CP_{\steer}}(\BW(t))$,
    \begin{align}
        \lim_{t\to\infty} \frac{\BW(t)}{\|\BW(t)\|} = \BW^\infty = - \lim_{t\to\infty} \frac{\nabla_{\BW}\CL_{\CP_{\steer}}(\BW(t); S)}{\|\nabla_{\BW}\CL_{\CP_{\steer}}(\BW(t); S)\|}, 
    \end{align}
    for some $\BW^\infty\in \CW_{\steer}$ with $\|\BW^\infty\| = 1$.
    \item The limit $\BW^\infty$ is along the direction of a first-order stationary point of the constrained optimization problem
    \begin{align}
    \label{eq:parameter_implicit_bias_lyu}
        \min_{\BW}\|\BW\|^2, \quad \text{\normalfont s.t.}~y_i\left<\bx_i, \CP_{\steer}(\BW)\right>\ge 1,~\forall i\in [n].
    \end{align}
    In other words, there exists a scaling factor $\tau>0$ such that $\tau \BW^\infty$ satisfies the Karush-Kuhn-
    Tucker (KKT) conditions of \eqref{eq:parameter_implicit_bias_lyu}.
    \end{enumerate}
\end{lemma}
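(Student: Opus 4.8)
The plan is to obtain all three claims by invoking the general theory of gradient flow on homogeneous models from \cite{Lyu2020Gradient} and \cite{ji2020directional}, so the proof reduces to checking that the present setup meets their hypotheses. The structural facts I would use are: (i) $\BW\mapsto\CP_{\steer}(\BW)$ is $L$-homogeneous, which is exactly \eqref{eq:l-homo}; (ii) it is a polynomial (block-multilinear) map, hence $C^\infty$, locally Lipschitz, and definable in an o-minimal structure (e.g.\ the semialgebraic one); and (iii) $\ell_{\exp}(\hat y,y)=\exp(-\hat y y)$ is the exponential loss treated verbatim by both references. Consequently $\CL_{\CP_{\steer}}(\cdot;S)=\sum_i\ell_{\exp}(q_i(\cdot),y_i)$ is a $C^\infty$ definable function, with each ``margin'' $q_i(\BW)\coloneqq y_i\langle\bx_i,\CP_{\steer}(\BW)\rangle$ being $L$-homogeneous and smooth in $\BW$.

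For part (a): \cref{assump:initial_iterate} says $\CL_{\CP_{\steer}}(\BW(t_0);S)<1$ for some $t_0$; since every summand of the loss is positive, this forces $\exp(-y_i\langle\bx_i,\CP_{\steer}(\BW(t_0))\rangle)<1$, i.e.\ $q_i(\BW(t_0))>0$ for all $i$, so the data are separated by the invariant predictor $\CP_{\steer}(\BW(t_0))$. This is precisely the ``eventual separability'' hypothesis of \cite{Lyu2020Gradient}. Since the flow is a descent flow the loss is non-increasing; the non-trivial content of their analysis is that the limit is exactly zero, giving $\CL_{\CP_{\steer}}(\BW(t);S)\to0$ and hence $\|\BW(t)\|\to\infty$. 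Finally, $\CL_{\CP_{\steer}}(\BW(t);S)\to0$ forces $\min_i q_i(\BW(t))\to\infty$, and since $|q_i(\BW)|\le\|\bx_i\|\,\|\CP_{\steer}(\BW)\|$ I would conclude $\|\CP_{\steer}(\BW(t))\|\to\infty$ as well.

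For parts (b) and (c): directional convergence of $\BW(t)/\|\BW(t)\|$ to some unit vector $\BW^\infty$, together with the alignment identity relating $\BW^\infty$ to the limit of the normalized negative gradient of $\CL_{\CP_{\steer}}$, is the main result of \cite{ji2020directional}, applicable because $\CL_{\CP_{\steer}}$ is locally Lipschitz, definable, and homogeneous. That $\BW^\infty$ is along the direction of a first-order stationary point of \eqref{eq:parameter_implicit_bias_lyu} --- i.e.\ some $\tau\BW^\infty$ with $\tau>0$ satisfies the KKT system of that smooth, feasible, $L$-homogeneous margin problem --- is the conclusion of \cite{Lyu2020Gradient}. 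Combining (a)--(c) yields the lemma.

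The hard part here is essentially bookkeeping rather than new mathematics: one must verify that the abstract hypotheses of \cite{Lyu2020Gradient,ji2020directional} --- local Lipschitzness and o-minimal definability of the parameterization, admissibility of $\ell_{\exp}$, and the homogeneity degree --- hold literally, and then transport the conclusions stated there for the parameter trajectory $\BW(t)$ to the induced predictor $\CP_{\steer}(\BW(t))$. The latter transfer is routine because $\CP_{\steer}$ is continuous and homogeneous, so directional limits and norm blow-up pass through it. One should also note that $\CP_{\steer}$ need not be onto $\R^{d_0}$ (its range is $\R^{d_0}_G$ by \cref{prop:invariant_predictor}), but this is immaterial at the level of this lemma, whose statements live entirely in parameter space; the restriction to $\R^{d_0}_G$ only enters later, when \cref{lemma:param_converge_align} is combined with \cref{prop:characterize_predictor} and \cref{prop:invariant_predictor} to deduce \cref{thm:gcnn_implicit_bias}.
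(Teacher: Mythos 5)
Your proposal is correct and follows the same route as the paper, which offers no independent proof of this lemma and simply imports it from \cite{Lyu2020Gradient} and \cite{ji2020directional}; your verification that the hypotheses of those works hold here (local Lipschitzness, definability, $L$-homogeneity of $\CP_{\steer}$ via \eqref{eq:l-homo}, and the separation condition supplied by \cref{assump:initial_iterate}) is exactly the bookkeeping that justifies the citation. The supplementary deductions you add --- $\|\BW(t)\|\to\infty$ by a compactness argument and $\|\CP_{\steer}(\BW(t))\|\to\infty$ from $\min_i q_i(\BW(t))\to\infty$ together with $|q_i(\BW)|\le\|\bx_i\|\,\|\CP_{\steer}(\BW)\|$ --- are sound.
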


\begin{proof}[Proof of \cref{thm:gcnn_implicit_bias}]
The first part of the proof is inspired by \cite{gunasekar2018implicit}. \textbf{To prove (a)}, let $\BW^\infty = [\bw_l^\infty]_{l=1}^L = \lim_{t\to\infty}\frac{\BW(t)}{\|\BW(t)\|}$ be the limit direction of $\BW(t)$, and let $\widetilde{\BW}^\infty = [\widetilde{\bw}_l^\infty]_{l=1}^L = \tau \BW^\infty$, $\tau>0$, be the stationary point of problem \eqref{eq:parameter_implicit_bias_lyu} by \cref{lemma:param_converge_align}. The KKT condition for $\widetilde{\BW}^\infty$ implies the existence of dual variables $\alpha_i\ge 0$, $i\in [n]$, such that
\begin{itemize}
    \item \textit{Primal feasibility:}
    \begin{align}
    \label{eq:primal_parameter}
        y_i\left<\CP_{\steer}(\widetilde{\BW}^\infty), \bx_i\right>\ge 1, \quad \forall i\in [n].
    \end{align}
    \item \textit{Stationarity:} 
    \begin{align}
    \label{eq:station_parameter}
        \widetilde{\BW}^\infty = \nabla_\BW\CP_{\steer}(\widetilde{\BW}^\infty)\cdot \left(\sum_{i=1}^n \alpha_i y_i \bx_i\right)
    \end{align}    
    \item \textit{Complementary slackness}:
    \begin{align}
    \label{eq:slack_parameter}
    y_i\left<\CP_{\steer}(\widetilde{\BW}^\infty), \bx_i\right>  >  1 \implies \alpha_i = 0.
    \end{align}
\end{itemize}
We claim that $\bgamma^* = \overline{\CP}_{\steer}(\widetilde{\BW}^\infty)$ is a stationary point of the following problem
\begin{align}
\label{eq:implicit_bias_predictor}
    \min_{\bgamma\in \R^{d_0}}\|\bgamma\|^2, \quad \text{s.t.}~ y_i\left<\overline{\bx}_i, \bgamma\right>\ge 1, ~\forall i\in [n].
\end{align}
That is, there exists $\tilde{\alpha}_i\ge 0$, $i\in [n]$, such that the following conditions are satisfied.
\begin{itemize}
    \item \textit{Primal feasibility:}
    \begin{align}
    \label{eq:primal_predictor}
        y_i\left<\overline{\CP}_{\steer}(\widetilde{\BW}^\infty), \overline{\bx}_i\right>\ge 1, \quad \forall i\in [n].
    \end{align}
    \item \textit{Stationarity:} 
    \begin{align}
    \label{eq:station_predictor}
        \overline{\CP}_\steer(\widetilde{\BW}^\infty) = \sum_{i=1}^n \tilde{\alpha}_i y_i \overline{\bx}_i
    \end{align}    
    \item \textit{Complementary slackness}:
    \begin{align}
    \label{eq:slack_predictor}
        y_i\left<\overline{\CP}_\steer(\widetilde{\BW}^\infty), \overline{\bx}_i\right> > 1 \implies \tilde{\alpha}_i = 0.
    \end{align}
\end{itemize}
Indeed, Eq.~\eqref{eq:primal_predictor} holds due to Eq.~\eqref{eq:primal_parameter} and $\left<\CP_{\steer}(\widetilde{\BW}^\infty), \bx_i\right>=\left<\overline{\CP}_{\steer}(\widetilde{\BW}^\infty), \overline{\bx}_i\right>$. Moreover, by the definition of $\CP_{\steer}(\BW)$ and $\overline{\CP}_{\steer}(\BW)$ in \eqref{eq:p_steerable_w}, we have for any $\bz\in \R^{d_0}$,
\begin{align}
    \nabla_{\bw_1}\overline{\CP}_{\steer}(\BW)\cdot \bz & = \bz [M(\bw_2, \cdots, \bw_L)]^\top\\
    \nabla_{\bw_1}\CP_{\steer}(\BW)\cdot \bz & = \left(\frac{1}{|G|}\sum_{g\in G} g\right)\bz [M(\bw_2, \cdots, \bw_L)]^\top = \overline{\bz}[M(\bw_2, \cdots, \bw_L)]^\top.
\end{align}
Therefore Eq.~\eqref{eq:station_parameter} implies
\begin{align}
    \widetilde{\bw}_1^\infty = \nabla_{\bw_1}\CP_{\steer}(\widetilde{\BW}^\infty)\cdot\left(\sum_{i=1}^n\alpha_i y_i \bx_i\right) = \left(\sum_{i=1}^n\alpha_i y_i \overline{\bx}_i\right)\cdot[M(\widetilde{\bw}_2^\infty, \cdots, \widetilde{\bw}_L^\infty)]^\top
\end{align}
Hence
\begin{align}
\nonumber
    \overline{\CP}_{\steer}(\widetilde{\BW}^\infty) & = \widetilde{\bw}_1^\infty M(\widetilde{\bw}_2^\infty, \cdots, \widetilde{\bw}_L^\infty) =  \left(\sum_{i=1}^n\alpha_i y_i \overline{\bx}_i\right)\left\|M(\widetilde{\bw}_2^\infty, \cdots, \widetilde{\bw}_L^\infty)\right\|^2\\
    \label{eq:station_predictor_proof}
    & = c\sum_{i=1}^n\alpha_i y_i \overline{\bx}_i,
\end{align}
where $ c = \left\|M(\widetilde{\bw}_2^\infty, \cdots, \widetilde{\bw}_L^\infty)\right\|^2 \ge 0$. From \eqref{eq:primal_predictor} we know that $\overline{\CP}_{\steer}(\widetilde{\BW}^\infty)\neq 0$ and hence $c>0$. Let $\tilde{\alpha}_i = c\alpha_i, \forall i\in [n]$. Then $\tilde{\alpha}_i\ge 0$, and the stationarity \eqref{eq:station_predictor} is satisfied due to Eq.~\eqref{eq:station_predictor_proof}, and the complementary slackness also holds due to $\tilde{\alpha}_i$ being a positive scaling of $\alpha_i$ for all $i\in [n]$. Therefore $\bgamma^* = \overline{\CP}_{\steer}(\widetilde{\BW}^\infty)$ is a stationary point of \eqref{eq:implicit_bias_predictor}. Since problem \eqref{eq:implicit_bias_predictor} is strongly convex, $\bgamma^* = \overline{\CP}_{\steer}(\widetilde{\BW}^\infty)$ is in fact the unique minimizer of \eqref{eq:implicit_bias_predictor}. Hence the limit direction of the predictor $\CP_{\steer}(\BW(t))$ is
\begin{align}
    \bbeta_{\steer}^\infty & = \lim_{t\to \infty}\frac{\CP_{\steer}(\BW(t))}{\|\CP_{\steer}(\BW(t))\|}  = \lim_{t\to\infty}\frac{\left(\frac{1}{|G|}\sum_{g\in G} g^\top \right)\overline{\CP}_{\steer}(\BW(t))}{\left\|\left(\frac{1}{|G|}\sum_{g\in G} g^\top \right)\overline{\CP}_{\steer}(\BW(t))\right\|}\\
    & = \lim_{t\to\infty}\frac{\left(\frac{1}{|G|}\sum_{g\in G} g^\top \right)\overline{\CP}_{\steer}\left(\frac{\tau\BW(t)}{\|\BW(t)\|}\right)}{\left\|\left(\frac{1}{|G|}\sum_{g\in G} g^\top \right)\overline{\CP}_{\steer}\left(\frac{\tau \BW(t)}{\|\BW(t)\|}\right)\right\|}
    = \frac{\left(\frac{1}{|G|}\sum_{g\in G} g^\top \right)\overline{\CP}_{\steer}\left(\widetilde{\BW}^\infty\right)}{\left\|\left(\frac{1}{|G|}\sum_{g\in G} g^\top \right)\overline{\CP}_{\steer}\left(\widetilde{\BW}^\infty\right)\right\|}\\
    & \propto \frac{1}{|G|}\sum_{g\in G}g^\top \bgamma^*,
\end{align}
where the third equality is due to $\overline{\CP}_{\steer}$ being $L$-homogeneous \eqref{eq:l-homo}, and the fourth equality comes from the continuity of $\overline{\CP}_{\steer}$ and that  $\CP_{\steer}(\widetilde{\BW}^\infty) = \left(\frac{1}{|G|}\sum_{g\in G} g^\top \right)\overline{\CP}_{\steer}\left(\widetilde{\BW}^\infty\right) \neq 0$ (because otherwise \eqref{eq:primal_parameter} can not hold.)

Finally, if $G$ acts unitarily on $\CX_0$, then Eq.~\eqref{eq:station_predictor} combined with \cref{prop:invariant_predictor} implies that $\bgamma^* = \overline{\CP}_{\steer}(\widetilde{\BW}^\infty)\in \R^{d_0}_G$. Hence
\begin{align}
    \bbeta_{\steer}^\infty\propto \frac{1}{|G|}\sum_{g\in G}g^\top\bgamma^* = \frac{1}{|G|}\sum_{g\in G}g\bgamma^* = \overline{\bgamma^*} = \bgamma^*,
\end{align}
where the last equality is again due to \cref{prop:invariant_predictor}. This concludes the proof of (a).

\textbf{To prove (b)}, we first show that problem \eqref{eq:main_theorem_equivalent_appendix} has a unique minimizer. To prove this, notice that $\text{Proj}_{\text{Range}(\CA)}: \R_{G}^{d_0}\to \text{Range}(\CA)$ is injective; indeed, for any $\bbeta\in \R^{d_0}_G$,
\begin{align}
    \proj_{\range(\CA)}\bbeta = 0 \implies \bbeta\in \range(\CA)^\perp = \ker(\CA^\top).
\end{align}
Thus $0 = \CA^\top \bbeta = \bbeta$, where the second equality is due to $\CA^\top$ being a projection onto $\range(\CA^\top) = \R^{d_0}_G$ (\cref{prop:invariant_predictor}) and $\bbeta\in \R_G^{d_0}$. Therefore the objective function in \eqref{eq:main_theorem_equivalent_appendix} is strongly convex on $\R^{d_0}_G$ and there exists a unique minimizer.

To show $\bbeta^\infty_{\steer}\propto \bbeta^*$, it suffices to verify that $\frac{1}{|G|}\sum_{g\in G}g^\top \bgamma^*=\bbeta^*$ is the minimizer of \eqref{eq:main_theorem_equivalent_appendix}. To this end, we notice that
\begin{itemize}
    \item $\frac{1}{|G|}\sum_{g\in G}g^\top \bgamma^* = \CA^\top \bgamma^*\in\R^{d_0}_G$ by \cref{prop:invariant_predictor};
    \item for all $i\in [n]$
    \begin{align}
        y_i\left<\bx_i, \frac{1}{|G|}\sum_{g\in G}g^\top \bgamma^*\right> = y_i\left<\overline{\bx}_i, \bgamma^*\right>\ge 1.
    \end{align}    
\end{itemize}
Thus $\frac{1}{|G|}\sum_{g\in G}g^\top\bgamma^*$ satisfies the constraints in \eqref{eq:main_theorem_equivalent_appendix}. Assume for the sake of contradiction that $\frac{1}{|G|}\sum_{g\in G}g^\top \bgamma^*\neq \bbeta^*$, then by the uniqueness of the minimizer of \eqref{eq:main_theorem_equivalent_appendix} we have
\begin{align}
\label{eq:contradiction}
    \|\widetilde{\bgamma}\| \coloneqq \|\proj_{\range(\CA)}\bbeta^*\| < \left\|\proj_{\range(\CA)} \frac{1}{|G|}\sum_{g\in G}g^\top\bgamma^*\right\|,
\end{align}
where $\widetilde{\bgamma}\coloneqq \proj_{\range(\CA)}\bbeta^*$. We make the following two claims to be proved shortly:
\begin{itemize}
    \item \textbf{Claim 1:} $y_i \left<\widetilde{\bgamma}, \overline{\bx}_i\right> = y_i \left<\bbeta^*, \bx_i\right> \ge 1, \forall i\in [n]$.
    \item \textbf{Claim 2:} $\proj_{\range(\CA)}\frac{1}{|G|}\sum_{g\in G}g^\top \bgamma^* = \bgamma^*$.
\end{itemize}
These two claims combined with \eqref{eq:contradiction} imply that $\widetilde{\bgamma}$ satisfies the constraint in \eqref{eq:main_theorem_appendix} and has a smaller norm compared to $\bgamma^*$, $\|\widetilde{\bgamma}\| < \|\bgamma^*\|$. This contradicts the fact that $\bgamma^*$ is the minimizer of problem \eqref{eq:main_theorem_appendix}. Therefore we have $\frac{1}{|G|}\sum_{g\in G}g^\top \bgamma^* = \bbeta^*$.

If we assume in addition that $\rho_0$ is unitary on $\CX_0$, then $\CA= \CA^\top: \bbeta\mapsto\overline{\bbeta}$ is an orthogonal projection from $\R^{d_0}$ onto $\R^{d_0}_G =\range(\CA)$. Therefore $\proj_{\range(\CA)}\bbeta  = \bbeta$ for $\bbeta\in\R_G^{d_0}$, and hence problems \eqref{eq:main_theorem_equivalent_appendix} and \eqref{eq:main_theorem_equivalent_unitary_appendix} are equivalent.

Finally, we prove the above two claims.

\textbf{Proof of Claim 1:} $y_i \left<\widetilde{\bgamma}, \overline{\bx}_i\right> = y_i\left<\bbeta^*, \bx_i\right> \ge 1, \forall i\in [n]$.

Since $\widetilde{\bgamma} = \proj_{\range(\CA)}\bbeta^*$, we have
\begin{align}
    \widetilde{\bgamma} - \bbeta^* \in \range(\CA)^\perp = \ker(\CA^\top).
\end{align}
Hence
\begin{align}
    \CA^\top \widetilde{\bgamma} = \CA^\top\bbeta^* = \bbeta^*,
\end{align}
where the second equality is due to $\bbeta^*\in\R^{d_0}_G = \range(\CA^\top)$ and $\CA^\top = \CA^\top\circ\CA^\top$ is a projection; cf.~\cref{prop:invariant_predictor}. Therefore, for all $i\in [n]$,
\begin{align}
    y_i\left<\widetilde{\bgamma}, \overline{\bx}_i\right> = y_i \left<\CA^\top\widetilde{\bgamma}, \bx_i\right> = y_i \left<\bbeta^*, \bx_i\right>\ge 1.
\end{align}

\textbf{Proof of Claim 2:} $\proj_{\range(\CA)}\CA^\top \bgamma^* = \bgamma^*$.

We first note that both $\proj_{\range(\CA)}\CA^\top \bgamma^*$ and $\bgamma^*$ are in the affine space $\CA^\top \bgamma^* + \ker(\CA^\top)$. Indeed,
\begin{align}
    & \left(\CA^\top\bgamma^* - \proj_{\range(\CA)}\CA^\top\bgamma^*\right)\in \range(\CA)^\perp = \ker(\CA^\top),\\
    & \CA^\top (\bgamma^* - \CA^\top \bgamma^*) = \CA^\top \bgamma^* - \CA^\top\CA^\top \bgamma^* = 0.
\end{align}
This also implies
\begin{align}
    \CA^\top\left(\proj_{\range(\CA)}\CA^\top \bgamma^*\right) = \CA^\top \bgamma^*.
\end{align}

Moreover, for any $i\in [n]$,
\begin{align}
    y_i\left<\proj_{\range(\CA)}\CA^\top \bgamma^*, \overline{\bx}_i\right> & = y_i\left<\CA^\top\left(\proj_{\range(\CA)}\CA^\top \bgamma^*\right), \bx_i\right> \\
    & = y_i\left<\CA^\top \bgamma^*, \bx_i\right> = y_i\left<\bgamma^*, \overline{\bx}_i\right> \ge 1.
\end{align}
Hence $\proj_{\range(\CA)}\CA^\top \bgamma^*$ also satisfies the constraint in \eqref{eq:main_theorem_appendix}. Since the orthogonal projection $\proj_{\range(\CA)}\CA^\top \bgamma^*$ achieves the minimal norm among all vectors in the affine space $\CA^\top\bgamma^*+\ker(\CA^\top)$ which includes $\bgamma^*$, this can only happen when $\proj_{\range(\CA)}\CA^\top \bgamma^* = \bgamma^*$ is the unique minimizer of \eqref{eq:main_theorem_appendix}.

This concludes the proof of \cref{thm:gcnn_implicit_bias}.
\end{proof}

\section{Proofs in \cref{sec:data_augmentation}}

\begin{repcorollary}{cor:augmentation}
    Let $\bbeta_{\steer}^\infty = \lim_{t\to\infty} \frac{\bbeta_{\steer}(t)}{\|\bbeta_{\steer}(t)\|}$ be the directional limit of the linear predictor $\bbeta_{\steer}(t) = \CP_{\steer}(\BW(t))$ parameterized by a linear steerable network trained using gradient flow on the \textbf{original} data set $S = \{(\bx_i, y_i), i\in[n]\}$. Correspondingly, let $\bbeta_{\fc}^\infty = \lim_{t\to\infty} \frac{\bbeta_{\fc}(t)}{\|\bbeta_{\fc} (t)\|}$,  $\bbeta_{\fc}(t) = \CP_{\fc}(\BW(t))$ \eqref{eq:p_fc_w}, be that of a linear fully-connected network trained on the \textbf{augmented} data set $S_{\aug}=\{(g\bx_i,y_i), i\in [n], g\in G\}$. If $G$ acts unitarily on $\CX_0$, then
    \begin{align}
    \bbeta^\infty_{\steer} = \bbeta^\infty_{\fc}.
    \end{align}
    In other words, the effect of using a linear steerable network for group-invariant binary classification is exactly the same as conducting \textbf{data-augmentation} for non-invariant models.
\end{repcorollary}

\begin{proof}
    By \cref{thm:gcnn_implicit_bias}, a positive scaling $\bgamma^* = \tau \bbeta_{\steer}^\infty$ of $\bbeta_{\steer}^\infty$ satisfies the KKT condition of \eqref{eq:main_theorem}. That is, there exists $\alpha_i\ge 0$, $i\in [n]$, such that
    \begin{itemize}
    \item \textit{Primal feasibility:}
    \begin{align}
    \label{eq:primal_invariant}
        y_i\left<\bgamma^*, \overline{\bx}_i\right>\ge 1, \quad \forall i\in [n].
    \end{align}
    \item \textit{Stationarity:} 
    \begin{align}
    \label{eq:station_invariant}
        \bgamma^* = \sum_{i=1}^n \alpha_i y_i \overline{\bx}_i
    \end{align}    
    \item \textit{Complementary slackness}:
    \begin{align}
    \label{eq:slack_invariant}
        y_i\left<\bgamma^*, \overline{\bx}_i\right> > 1 \implies \alpha_i = 0.
    \end{align}
\end{itemize}
    Using \cref{cor:fully-connected-implicit}, we only need to show that $\bgamma^*$ is also the solution of
    \begin{align}
        \arg\min_{\bgamma\in\R^{d_0}} \|\bgamma\|^2, \quad \text{s.t.}~ y_i\left<g\bx_i, \bgamma\right>\ge 1, \forall i\in [n], \forall g\in G.
    \end{align}
    That is, there exists $\tilde{\alpha}_{i, g}\ge 0, \forall i\in [n], \forall g\in G$, such that
    \begin{itemize}
    \item \textit{Primal feasibility:}
    \begin{align}
    \label{eq:primal_augmentation}
        y_i\left<\bgamma^*, g\bx_i\right>\ge 1, \quad \forall i\in [n], \forall g\in G.
    \end{align}
    \item \textit{Stationarity:} 
    \begin{align}
    \label{eq:station_augmentation}
        \bgamma^* = \sum_{i=1}^n\sum_{g\in G} \tilde{\alpha}_{i, g} y_i g \bx_i
    \end{align}    
    \item \textit{Complementary slackness}:
    \begin{align}
    \label{eq:slack_augmentation}
        y_i\left<\bgamma^*, g\bx_i\right> > 1 \implies \tilde{\alpha}_{i, g} = 0.
    \end{align}
    \end{itemize}
Indeed, we set $\tilde{\alpha}_{i, g} = \frac{1}{|G|}\alpha_i\ge 0$. Since $\bgamma^*\in \R^{d_0}_G$ if $\rho_0$ is unitary (this can also be observed from Eq.~\eqref{eq:station_invariant}), we have  $y_i\left<\bgamma^*, g \bx_i\right> = y_i \left<\bgamma^*, \bx_i\right>, \forall g\in G, \forall i\in [n]$. Hence we have primal feasibility \eqref{eq:primal_augmentation} from \eqref{eq:primal_invariant},
\begin{align}
\label{eq:primal_augmentation_proof}
    y_i \left<\bgamma^*, g\bx_i\right> = y_i \left<\bgamma^*, \frac{1}{|G|}\sum_{h\in G}hg\bx_i\right> = y_i \left<\bgamma^*, \overline{\bx}_i\right> \ge 1, \quad \forall i\in [n], \forall g\in G.
\end{align}
Stationarity \eqref{eq:station_augmentation} holds since
\begin{align}
    \sum_{i=1}^n\sum_{g\in G} \tilde{\alpha}_{i, g}y_i g\bx_i = \sum_{i=1}^n\sum_{g\in G} \frac{1}{|G|}\alpha_iy_i g\bx_i = \sum_{i=1}^n\alpha_i y_i \overline{\bx}_i = \bgamma^*,
\end{align}
    where the last equality comes from \eqref{eq:station_invariant}. Finally, if $y_i\left<\bgamma^*, g\bx_i\right> > 1$, then \eqref{eq:primal_augmentation_proof} and \eqref{eq:slack_invariant} imply that
    \begin{align}
        \tilde{\alpha}_{i, g} = \frac{1}{|G|}\alpha_i = 0.
    \end{align}
    This proves the condition for complementary slackness \eqref{eq:slack_augmentation}.
\end{proof}

\begin{repcorollary}{cor:new_inner_product}
    Let $\bbeta_{\steer}^\infty$ be the same as that in \cref{cor:augmentation}. Let $\bbeta_{\fc}^{\rho_0, \infty} = \lim_{t\to\infty}\frac{\bbeta_{\fc}^{\rho_0}(t)}{\|\bbeta_{\fc}^{\rho_0}(t)\|}$ be the limit direction of $\bbeta_{\fc}^{\rho_0}(t) = \CP_{\fc}(\BW(t))$  under the gradient flow of the modified empirical loss $\CL_{\CP_{\fc}}^{\rho_0}(\BW; S_{\aug})$ \eqref{eq:modified_fc_loss} for a linear fully-connected network on the \textbf{augmented} data set $S_{\aug}$. Then
    \begin{align}
        \bbeta_{\steer}^\infty \propto \left(\frac{1}{|G|}\sum_{g\in G}\rho_0(g)^\top \rho_0(g)\right)^{1/2} \bbeta_{\fc}^{\rho_0, \infty}.
    \end{align}
    Consequently, we have $\left<\bx, \bbeta_{\steer}^\infty\right> \propto \left<\bx, \bbeta_{\fc}^{\rho_0, \infty}\right>_{\rho_0}$ for all $\bx\in \R^{d_0}$.
\end{repcorollary}

\begin{proof}
    The proof is similar to that of \cref{cor:augmentation}. By \cref{thm:gcnn_implicit_bias}, the limit direction
    $\bbeta_{\steer}^\infty$ is proportional to $\frac{1}{|G|}\sum_{g\in G}g^\top \bgamma^*$, where $\bgamma^*$ satisfies the same KKT condition (\eqref{eq:primal_invariant}, \eqref{eq:station_invariant} and \eqref{eq:slack_invariant}) of problem \eqref{eq:main_theorem} as in the proof of \cref{cor:augmentation}. On the other hand, note that the loss function $\CL_{\CP_{\fc}}^{\rho_0}(\BW; S_{\aug})$ \eqref{eq:modified_fc_loss} for the modified fully-connected network on the augmented data set $S_{\aug}$ is equivalent to
    \begin{align}
        \CL_{\CP_{\fc}}^{\rho_0}(\BW; S_{\aug})
        & =\sum_{g\in G}\sum_{i=1}^n\ell_{\exp}\left(\left<g \bx_i, \CP_{\fc}(\BW)\right>_{\rho_0}, y_i\right)\\
        & = \sum_{g\in G}\sum_{i=1}^n\ell_{\exp}\left(\left<A g \bx_i, \CP_{\fc}(\BW)\right>, y_i\right)\\
        & = \CL_{\CP_{\fc}}(\BW; \widetilde{S}_{\aug}),
    \end{align}
    where
    \begin{align}
        A = \left(\frac{1}{|G|}\sum_{g\in G}\rho_0(g)^\top \rho_0(g)\right)^{1/2}, \quad \widetilde{S}_{\aug} = \left\{ (Ag\bx_i, y_i): i\in [n], g\in G\right\}.
    \end{align}
    Therefore, by \cref{cor:fully-connected-implicit}, $\bbeta_{\fc}^{\rho_0, \infty}$ is proportional to the solution $\widetilde{\bgamma}^*_{\aug}$ of
    \begin{align}
    \label{eq:gamma_tilde_aug_star}
        \widetilde{\bgamma}^*_{\aug}=\arg\min_{\bgamma\in\R^{d_0}} \|\bgamma\|^2, \quad \text{s.t.}~ y_i\left<Ag\bx_i, \bgamma\right>\ge 1, \forall i\in [n], \forall g\in G.
    \end{align}
    We claim that $\widetilde{\bgamma}_{\aug}^* = A^{-1}\left(\frac{1}{|G|}\sum_{g\in G}g^\top \bgamma^*\right)$. To see this, let $\widetilde{\alpha}_{i, g} = \alpha_i, \forall i\in [n]$, where $\alpha_i\ge 0$ is the dual variable for $\bgamma^*$ of problem \eqref{eq:main_theorem}. We verify below that the KKT conditions for problem \eqref{eq:gamma_tilde_aug_star} are satisfied for the primal-dual pair $A^{-1}\left(\frac{1}{|G|}\sum_{g\in G}g^\top \bgamma^*\right)$ and $(\widetilde{\alpha}_{i, g})_{i\in [n], g\in G}$.
    \begin{itemize}
        \item \textit{Primal feasibility:} for any $i\in [n]$ and $h\in G$,
        \begin{align}
            y_i\left<A^{-1}\left(\frac{1}{|G|}\sum_{g\in G}g^\top \bgamma^*\right), Ah\bx_i\right> & = y_i\left<\frac{1}{|G|}\sum_{g\in G}g^\top \bgamma^*, h\bx_i\right>\\
            & = y_i \left<\bgamma^*, \left(\frac{1}{|G|}\sum_{g\in G} g\right)h \bx_i\right>\\
            \label{eq:primal_augmentation_proof2}
            & = y_i \left<\bgamma^*,  \overline{\bx}_i\right> \ge 1,
        \end{align}
        where the first equality is due to $A^{\top}=  A$ being symmetric, and the last inequality is due to \eqref{eq:primal_invariant}.
        \item \textit{Stationarity:}
        \begin{align}
            A^{-1}\left(\frac{1}{|G|}\sum_{g\in G}g^\top \bgamma^*\right)
            & = A^{-1}\left(\frac{1}{|G|}\sum_{g\in G}g^\top\right)\sum_{i=1}^n \alpha_i y_i \overline{\bx}_i\\
            & = A^{-1}\frac{1}{|G|^2}\left(\sum_{g\in G}g^\top\right)\sum_{i=1}^n \alpha_i y_i \sum_{h\in G}h\bx_i\\
            & = A\cdot A^{-2}\frac{1}{|G|^2}\sum_{i=1}^n\alpha_i y_i\sum_{g\in G}\sum_{h\in G}g^\top h\bx_i\\
            & = A\cdot A^{-2}\frac{1}{|G|}\sum_{i=1}^n\alpha_i y_i\left(\frac{1}{|G|}\sum_{g\in G}g^\top g\right)\sum_{h\in G} h\bx_i\\
            & = \sum_{i=1}^n\frac{\alpha_i}{|G|} y_i\sum_{h\in G} A h\bx_i\\
            & = \sum_{i=1}^n\sum_{g\in G}\widetilde{\alpha}_{i,g}y_i Ag\bx_i,
        \end{align}
        where the first equality is due to \eqref{eq:station_invariant}, and the last equality comes from the definition of the dual variable $\widetilde{\alpha}_{i,g} = \frac{1}{|G|}\alpha_i, \forall i\in [n], \forall g\in G$.
        \item \textit{Complementary slackness:} if $y_i\left<A^{-1}\left(\frac{1}{|G|}\sum_{g\in G}g^\top \bgamma^*\right), Ag\bx_i\right> > 1$ for some $i\in [n]$ and $g\in G$, then \eqref{eq:primal_augmentation_proof2} and \eqref{eq:slack_invariant} imply that $\widetilde{\alpha}_{i, g} = \frac{1}{|G|}\alpha_i = 0$.
    \end{itemize}
    Hence $\widetilde{\bgamma}_{\aug}^* = A^{-1}\left(\frac{1}{|G|}\sum_{g\in G}g^\top \bgamma^*\right)$ is indeed the solution of \eqref{eq:gamma_tilde_aug_star}. Therefore,
    \begin{align}
        \bbeta^\infty_{\steer} \propto \frac{1}{|G|}\sum_{g\in G}g^\top \bgamma^* = A\bgamma_{\aug}^* \propto A\bbeta_{\fc}^{\rho_0, \infty}.
    \end{align}
    This completes the proof.
\end{proof}

\section{Proofs of \cref{thm:margin_compare}}

\begin{reptheorem}{thm:margin_compare}
    Let $\bbeta_{\steer}^\infty$ be the directional limit of a linear-steerable-network-parameterized predictor trained on the \textbf{original} data set $S = \{(\bx_i, y_i), i\in[n]\}$; let $\bbeta_{\fc}^\infty$ be that of a linear fully-connected network \textbf{also} trained on the same data set $S$. Let $M_{\steer}$ and $M_{\fc}$, respectively, be the (signed) margin of $\bbeta^\infty_{\steer}$ and $\bbeta^\infty_{\fc}$ on the \textbf{augmented} data set $S_\aug = \{(g\bx_i, y_i): i\in [n], g\in G\}$, \textit{i.e.},
    \begin{align}
        M_{\steer} \coloneqq \min_{i\in [n], g\in G}y_i \left<\bbeta^\infty_{\steer}, g\bx_i\right>, \quad M_{\fc} \coloneqq \min_{i\in [n], g\in G}y_i \left<\bbeta^\infty_{\fc}, g\bx_i\right>.
    \end{align}
    Then we always have $M_{\steer}\ge M_{\fc}$.
\end{reptheorem}

\begin{proof}
    By \cref{cor:fully-connected-implicit} and \cref{cor:augmentation}, we have
    \begin{align}
        \label{eq:cor_margin_gamma_steer}
        \bbeta^\infty_{\steer} & \propto \bgamma_{\steer}^* = \arg\min_{\bgamma\in \R^{d_0}}\|\bgamma\|^2, \quad\text{s.t.}~ y_i\left<\bgamma, g\bx_i\right>\ge 1, \forall g\in G, \forall i\in [n],\\
        \label{eq:cor_margin_gamma_fc}
        \bbeta^\infty_{\fc} & \propto \bgamma_{\fc}^* = \arg\min_{\bgamma\in \R^{d_0}}\|\bgamma\|^2, \quad\text{s.t.}~ y_i\left<\bgamma, \bx_i\right>\ge 1, \forall i\in [n].
    \end{align}
    Moreover, the margin $M_{\steer}$ of the steerable network $\bbeta^\infty_{\steer}$ on the augmented data set $S_{\aug}$ is $M_{\steer} = \frac{1}{\|\bgamma^*_{\steer}\|}$. Consider the following three cases.
    \begin{itemize}
        \item \textit{Case 1:} $\min_{i\in [n], g\in G}y_i\left<\bgamma^*_{\fc}, g \bx_i\right>\ge 1$. In this case, $\bgamma_{\fc}^*$ also satisfies the more restrictive constraint in \eqref{eq:cor_margin_gamma_steer}. Therefore $\bgamma_{\fc}^* = \bgamma_{\steer}^*$, which implies
        \begin{align}
            \bbeta_{\steer}^\infty = \bbeta_{\fc}^\infty, \quad \text{and}\quad M_{\steer} = M_{\fc}.
        \end{align}
        \item  \textit{Case 2:} $\min_{i\in [n], g\in G}y_i\left<\bgamma^*_{\fc}, g \bx_i\right>\le 0$. This implies that $\bgamma_{\fc}^*$, and hence $\bbeta_{\fc}^\infty$ has a non-positive margin on $D_{\aug}$. Thus
        \begin{align}
            M_{\fc}\le 0 < M_{\steer}.
        \end{align}
        \item \textit{Case 3:}  $0<\min_{i\in [n], g\in G}y_i\left<\bgamma^*_{\fc}, g \bx_i\right>< 1$. In this case, define
        \begin{align}
            \widetilde{\bgamma}^*_{\fc} \coloneqq \frac{\bgamma_{\fc}^*}{\min_{i\in[n], g\in G}y_i\left< \bgamma_{\fc}^*, g\bx_i\right>}.
        \end{align}
        Then $\widetilde{\bgamma}_{\fc}^*$ satisfies the constraint in \eqref{eq:cor_margin_gamma_steer}, and hence
        \begin{align}
            \|\widetilde{\bgamma}_{\fc}^*\| \ge \|\bgamma_{\steer}^*\|.
        \end{align}
        Therefore
        \begin{align}
            M_{\fc} = \min_{i\in [n], g\in G}y_i\left<\bbeta^\infty_{\fc}, g\bx_i\right> & = \frac{\min_{i\in [n], g\in G}y_i\left<\bgamma_{\fc}^*, g\bx_i\right>}{\|\bgamma_{\fc}^*\|}= \frac{1}{\|\widetilde{\bgamma}^*_{\fc}\|} \\
            & \le \frac{1}{\|\bgamma_{\steer}^*\|} = M_{\steer}.
        \end{align}
    \end{itemize}
    This completes the proof.
\end{proof}

\section{Proof of \cref{thm:generalization}}

To prove \cref{thm:generalization}, we need first the following preliminaries.
\begin{lemma}
\label{lemma:invariant_separable}
    If $\CD$ is $G$-invariant and linearly separable, then $\CD$ can be linearly separated by a $G$-invariant classifier. That is, there exists $\bbeta\in \R^{d_0}_G$ such that
    \begin{align}
    \label{eq:linearly_separable_appendix}
        \BP_{(\bx, y)\sim \CD}\left[y\left<\bx, \bbeta\right>\ge 1\right]= 1.
    \end{align}
\end{lemma}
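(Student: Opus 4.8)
The plan is to take an arbitrary separating hyperplane and symmetrize it over the group, then check that the $G$-invariance of $\CD$ forces the symmetrized predictor to still separate the data, with probability one. Concretely, let $\bbeta\in\R^{d_0}$ be a separator as in \eqref{eq:linearly_separable}, and set $\overline{\bbeta}\coloneqq \CA^\top\bbeta = \frac{1}{|G|}\sum_{g\in G}\rho_0(g)^\top\bbeta$. By \cref{prop:invariant_predictor}(b) we have $\overline{\bbeta}\in\range(\CA^\top)=\R^{d_0}_G$, so the only thing left is to verify that $\overline{\bbeta}$ separates $\CD$.

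\textbf{Key steps.} First I would translate the $G$-invariance of $\CD$ into a statement about $\bbeta$: applying the defining identity $(\rho_0(g)\otimes\text{Id})_*\CD=\CD$ to the measurable set $A=\{(\bx,y):y\left<\bx,\bbeta\right>\ge 1\}$ and using $\left<\rho_0(g)\bx,\bbeta\right>=\left<\bx,\rho_0(g)^\top\bbeta\right>$, one gets
\begin{align}
    \BP_{(\bx,y)\sim\CD}\left[y\left<\bx,\rho_0(g)^\top\bbeta\right>\ge 1\right]=\CD(A)=1,\qquad\forall g\in G.
\end{align}
Second, since $G$ is finite, the intersection $\bigcap_{g\in G}\{(\bx,y):y\left<\bx,\rho_0(g)^\top\bbeta\right>\ge 1\}$ of finitely many probability-one events still has probability one. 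Third, on that event I average the inequalities over $g\in G$:
\begin{align}
    y\left<\bx,\overline{\bbeta}\right> = \frac{1}{|G|}\sum_{g\in G} y\left<\bx,\rho_0(g)^\top\bbeta\right>\ge 1,
\end{align}
which shows $\BP_{(\bx,y)\sim\CD}\left[y\left<\bx,\overline{\bbeta}\right>\ge 1\right]=1$, i.e. $\overline{\bbeta}\in\R^{d_0}_G$ is the desired invariant separator.

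\textbf{Main obstacle.} There is essentially no hard step here; the only point requiring a word of care is the measure-theoretic one in the second step (a finite intersection of almost-sure events is almost sure), which is immediate because $|G|<\infty$. I would just make sure to invoke \cref{prop:invariant_predictor}(b) for the membership $\overline{\bbeta}\in\R^{d_0}_G$ rather than re-deriving it, and to spell out the change of variables $\left<\rho_0(g)\bx,\bbeta\right>=\left<\bx,\rho_0(g)^\top\bbeta\right>$ so that the push-forward computation is unambiguous.
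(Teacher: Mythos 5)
Your proposal is correct and is essentially the paper's own argument: both symmetrize the separator via $\frac{1}{|G|}\sum_{g\in G}\rho_0(g)^\top\bbeta$ and use the push-forward invariance of $\CD$ plus finiteness of $G$ (you intersect finitely many probability-one events; the paper union-bounds the complementary null events, which is the same thing). The only nit is notational: the paper reserves $\overline{\bbeta}$ for $\CA(\bbeta)$ rather than $\CA^\top(\bbeta)$, so you may want a different symbol for your symmetrized vector to avoid a clash.
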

\begin{proof}
    Since $\CD$ is linearly separable, there exists $\bbeta_0\in\R^{d_0}$ such that $\BP_{(\bx, y)\sim \CD}\left[y\left<\bx, \bbeta_0\right>\ge 1\right]= 1$. Let $\bbeta = \frac{1}{|G|}\sum_{g\in G} g^\top \bbeta_0\in\R^{d_0}_G$, and we aim to show that $\CD$ can also be linearly separated by $\bbeta$. Indeed, for all $(\bx, y)\in \R^{d_0}\times \{\pm 1\}$, 
    \begin{align}
        y\left<\bx, \bbeta\right> < 1
        & \implies y\left<\bx, \frac{1}{|G|}\sum_{g\in G}g^{\top}\bbeta_0\right> < 1
        \implies y\left<\frac{1}{|G|}\sum_{g\in G}g\bx, \bbeta_0\right> < 1\\
        & \implies \exists g\in G, ~\text{s.t.}~y\left<g\bx, \bbeta_0\right> < 1.
    \end{align}
    Let $E = \left\{(\bx, y): y\left<\bx, \bbeta_0\right> < 1\right\}\subset \R^{d_0}\times\{\pm 1\}$,  then
    \begin{align}
        \BP_{(\bx, y)\sim \CD}\left[y\left<\bx, \bbeta\right> <  1\right] & \le \sum_{g\in G}\BP_{(\bx, y)\sim \CD}\left[y\left<g\bx, \bbeta_0\right> <  1\right]\\
        & = \sum_{g\in G}\CD\left\{(\bx, y): (\rho_0(g)\otimes \text{Id})(\bx, y)\in E\right\}\\
        \label{eq:lemma_invariant_separator}
        & = \sum_{g\in G} (\rho_0(g)\otimes \text{Id})_*\CD\left(E\right) = \sum_{g\in G} \CD\left(E\right)\\
        & = \sum_{g\in G}\BP_{(\bx, y)\sim \CD}\left[ y\left<\bx, \bbeta_0\right> < 1\right] = 0,
    \end{align}
    where the second equality in \eqref{eq:lemma_invariant_separator} is due to $\CD$ being $G$-invariant. Therefore $\CD$ can be separated by $\bbeta\in\R^{d_0}_G.$
\end{proof}

\begin{definition}[Empirical Rademacher complexity] Let $\CF\subset \R^{\CZ}$ be a class of real-valued functions on $\CZ$, and let $S = (\bz_1, \cdots, \bz_n)$ be a set of $n$ samples from $\CZ$. The (empirical) Rademacher complexity $\widehat{\CR}_n(\CF, S)$ of $\CF$ over $S$ is defined as
\begin{align}
    \label{eq:def_rademacher}
        \widehat{\CR}_n(\CF, S) \coloneqq \frac{1}{n}\BE_{\bsigma\sim \left\{\pm 1\right\}^n}\left[\sup_{f\in \CF}\sum_{i=1}^n\sigma_i f(\bz_i)\right],
\end{align}
where $\bsigma = (\sigma_1, \cdots, \sigma_n)$ are the i.i.d. Rademacher variables satisfying $\BP[\sigma_i=1]=\BP[\sigma_i=-1]=1/2$.
\end{definition}

\begin{lemma}[Talagrand’s Contraction Lemma, Lemma 8 in \cite{mohri2014learning}]
\label{lemma:contraction}
Let $\Phi_i: \R\to\R$, $i\in [n]$, be $1$-Lipschitz functions, $\CF\subset \R^{\CZ}$ be a function class, and $S = (\bz_1, \dots, \bz_n)\in \CZ^n$ be a set of $n$ samples from $\CZ$. Then
\begin{align}
    \label{eq:contraction_lemma}
        \widehat{\CR}_n(\CF, S) \ge \frac{1}{n}\BE_{\bsigma\sim \left\{\pm 1\right\}^n}\left[\sup_{f\in \CF}\sum_{i=1}^n\sigma_i \Phi_i\circ f(\bz_i)\right].
\end{align}
\end{lemma}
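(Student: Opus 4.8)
The plan is to prove the inequality by a \emph{peeling} (or coordinate-by-coordinate replacement) argument: I will replace the functions $\Phi_i$ by the identity one index at a time, showing at each step that the relevant expectation of the supremum cannot decrease. It suffices to establish a single-coordinate reduction and then iterate it $n$ times; after all $n$ coordinates have been peeled off, the inner expression becomes $\BE_{\bsigma}[\sup_{f\in\CF}\sum_{i=1}^n \sigma_i f(\bz_i)]$, which is precisely $n\,\widehat{\CR}_n(\CF, S)$, yielding the claim in the stated direction.

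For the single-coordinate step, I would fix an index $k$ and condition on all Rademacher variables $\sigma_j$ with $j\neq k$. Writing $u(f)\coloneqq \sum_{j\neq k}\sigma_j \Phi_j(f(\bz_j))$ for the now-fixed partial sum, averaging over the two equally likely values $\sigma_k=\pm 1$ gives
\begin{align}
    2\,\BE_{\sigma_k}\Big[\sup_{f}\big(u(f)+\sigma_k\Phi_k(f(\bz_k))\big)\Big]
    = \sup_{f}\big(u(f)+\Phi_k(f(\bz_k))\big) + \sup_{f'}\big(u(f')-\Phi_k(f'(\bz_k))\big).
\end{align}
The objective of the step is to bound this quantity by the same expression with $\Phi_k$ replaced by the identity.

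The crux of the whole argument is the pointwise estimate underlying this bound. For arbitrary $f,f'\in\CF$, the $1$-Lipschitz property of $\Phi_k$ gives $\Phi_k(f(\bz_k))-\Phi_k(f'(\bz_k))\le |f(\bz_k)-f'(\bz_k)|$, so that
\begin{align}
    u(f)+u(f')+\Phi_k(f(\bz_k))-\Phi_k(f'(\bz_k))
    \le u(f)+u(f')+|f(\bz_k)-f'(\bz_k)|.
\end{align}
Splitting on the sign of $f(\bz_k)-f'(\bz_k)$ removes the absolute value and re-pairs the terms: if $f(\bz_k)\ge f'(\bz_k)$ the right side equals $\big(u(f)+f(\bz_k)\big)+\big(u(f')-f'(\bz_k)\big)$, and otherwise it equals $\big(u(f)-f(\bz_k)\big)+\big(u(f')+f'(\bz_k)\big)$. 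In both cases each paired term is dominated by the corresponding supremum $\sup_{\tilde f}(u(\tilde f)\pm \tilde f(\bz_k))$, so taking the supremum over $f,f'$ on the left yields
\begin{align}
    \sup_{f}\big(u(f)+\Phi_k(f(\bz_k))\big)+\sup_{f'}\big(u(f')-\Phi_k(f'(\bz_k))\big)
    \le \sup_{f}\big(u(f)+f(\bz_k)\big)+\sup_{f'}\big(u(f')-f'(\bz_k)\big).
\end{align}

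Dividing by $2$ shows the conditional expectation over $\sigma_k$ only grows when $\Phi_k$ is replaced by the identity, and taking the outer expectation over the remaining $\sigma_j$ preserves this. I would then apply the reduction successively for $k=1,\dots,n$, each step leaving the already-peeled coordinates untouched, to obtain $\BE_{\bsigma}[\sup_f\sum_i \sigma_i\Phi_i(f(\bz_i))]\le \BE_{\bsigma}[\sup_f\sum_i \sigma_i f(\bz_i)]$, which is the statement. The one subtlety to handle carefully is that the suprema need not be attained, so the pointwise inequality must be stated for arbitrary pairs $f,f'$ rather than for maximizers; this is exactly why the sign-based case analysis is performed at the level of individual function pairs before passing to the supremum.
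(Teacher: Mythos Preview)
Your proof is correct and follows the standard ``peeling'' argument for Talagrand's contraction lemma. Note, however, that the paper does not supply its own proof of this statement: it is stated as a known result with a citation to \cite{mohri2014learning} (Lemma~8 there), so there is no in-paper proof to compare against. Your argument is precisely the classical one found in that reference (and in Ledoux--Talagrand), including the correct handling of the subtlety that the suprema need not be attained.
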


\begin{lemma}
    \label{lemma:complexity_invariant_classifier}
    Let $\CH = \left\{\bx\mapsto \left<\bbeta, \bx\right>: \bbeta\in \R^{d_0}_G, \|\bbeta\|\le B\right\}$ be the function space of $G$-invariant linear functions of bounded norm, and let $S_{\bx}= \{\bx_1, \cdots, \bx_n\}\subset\R^{d_0}$ be a set of $n$ samples from $\R^{d_0}$. Then
    \begin{align}
        \widehat{\CR}_n(\CH, S_{\bx})\le \frac{B\max_{i\in [n]}\|\overline{\bx}_i \|}{\sqrt{n}}
    \end{align}
\end{lemma}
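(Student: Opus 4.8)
The plan is to bound the empirical Rademacher complexity of the class $\CH$ of norm-constrained $G$-invariant linear functions by the standard argument for linear classes, but with the key observation that on the subspace $\R^{d_0}_G$ the inner product $\langle \bbeta, \bx_i\rangle$ only sees the group-averaged data point $\overline{\bx}_i$. First I would use the identity $\langle \bbeta, \bx_i\rangle = \langle \bbeta, \overline{\bx}_i\rangle$ for all $\bbeta\in\R^{d_0}_G$; this follows from \cref{prop:invariant_predictor}(b), since $\bbeta\in\R^{d_0}_G=\range(\CA^\top)$ means $\bbeta = \CA^\top\bbeta$, and hence $\langle\bbeta,\bx_i\rangle = \langle\CA^\top\bbeta,\bx_i\rangle = \langle\bbeta,\CA\bx_i\rangle = \langle\bbeta,\overline{\bx}_i\rangle$. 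Substituting into the definition \eqref{eq:def_rademacher},
\begin{align}
    \widehat{\CR}_n(\CH, S_{\bx}) = \frac{1}{n}\BE_{\bsigma}\left[\sup_{\bbeta\in\R^{d_0}_G,\,\|\bbeta\|\le B}\sum_{i=1}^n \sigma_i\langle\bbeta,\overline{\bx}_i\rangle\right] = \frac{1}{n}\BE_{\bsigma}\left[\sup_{\bbeta\in\R^{d_0}_G,\,\|\bbeta\|\le B}\left\langle\bbeta, \sum_{i=1}^n\sigma_i\overline{\bx}_i\right\rangle\right].
\end{align}

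Next I would bound the supremum inside. By Cauchy–Schwarz (over all of $\R^{d_0}$, which only enlarges the feasible set), $\sup_{\|\bbeta\|\le B}\langle\bbeta, \sum_i\sigma_i\overline{\bx}_i\rangle \le B\,\|\sum_{i=1}^n\sigma_i\overline{\bx}_i\|$. Then apply Jensen's inequality to pull the expectation inside the square root: $\BE_{\bsigma}\|\sum_i\sigma_i\overline{\bx}_i\| \le \sqrt{\BE_{\bsigma}\|\sum_i\sigma_i\overline{\bx}_i\|^2}$. Expanding the squared norm and using $\BE[\sigma_i\sigma_j]=\delta_{ij}$ gives $\BE_{\bsigma}\|\sum_i\sigma_i\overline{\bx}_i\|^2 = \sum_{i=1}^n\|\overline{\bx}_i\|^2 \le n\max_{i\in[n]}\|\overline{\bx}_i\|^2$. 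Chaining these bounds yields
\begin{align}
    \widehat{\CR}_n(\CH, S_{\bx}) \le \frac{B}{n}\sqrt{n\max_{i\in[n]}\|\overline{\bx}_i\|^2} = \frac{B\max_{i\in[n]}\|\overline{\bx}_i\|}{\sqrt{n}},
\end{align}
which is the claimed bound.

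This argument is entirely routine and I do not anticipate any real obstacle; the only point requiring care is the very first step, namely justifying that one may replace $\bx_i$ by $\overline{\bx}_i$ inside the supremum, which rests on the invariance characterization in \cref{prop:invariant_predictor}(b). One minor subtlety worth a remark: relaxing the constraint $\bbeta\in\R^{d_0}_G$ to $\bbeta\in\R^{d_0}$ in the Cauchy–Schwarz step is valid because it can only increase the supremum, so the upper bound still holds; there is no need to project the Rademacher sum $\sum_i\sigma_i\overline{\bx}_i$ onto $\R^{d_0}_G$, although doing so (when $\rho_0$ is unitary, so that $\overline{\bx}_i\in\range(\CA)$ and the projection fixes it) would give the same constant.
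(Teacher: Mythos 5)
Your proposal is correct and follows essentially the same route as the paper's proof: replace $\bx_i$ by $\overline{\bx}_i$ using the invariance of $\bbeta$, apply Cauchy--Schwarz and Jensen, and expand the second moment of the Rademacher sum. Your remark that relaxing the constraint to $\|\bbeta\|\le B$ over all of $\R^{d_0}$ only enlarges the supremum is in fact slightly more careful than the paper, which writes that step as an equality.
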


\begin{proof}
    By the definition of empirical Rademacher complexity \eqref{eq:def_rademacher}, we have
    \begin{align}
       n\widehat{\CR}_n(\CH, S_{\bx}) & = \BE_{\bsigma}\left[\sup_{\bbeta\in \R_G^{d_0}, \|\bbeta\|\le B}\sum_{i=1}^n\sigma_i \left<\bbeta, \bx_i\right>\right]\\
       \label{eq:rademacher_proof_1}
       & = \BE_{\bsigma}\left[\sup_{\bbeta\in \R_G^{d_0}, \|\bbeta\|\le B}\sum_{i=1}^n\sigma_i \left<\bbeta, \overline{\bx}_i\right>\right]\\
       & = \BE_{\bsigma}\sup_{\bbeta\in \R_G^{d_0}, \|\bbeta\|\le B}\left<\bbeta,\sum_{i=1}^n\sigma_i  \overline{\bx}_i\right>\\
       & = B\BE_{\bsigma}\left\|\sum_{i=1}^n\sigma_i\overline{\bx}_i\right\|\\
       \label{eq:rademacher_proof_2}
       & \le B \left(\BE_{\bsigma}\left\|\sum_{i=1}^n\sigma_i\overline{\bx}_i\right\|^2\right)^{1/2},
    \end{align}
    where \eqref{eq:rademacher_proof_1} is due to $\bbeta\in \R^{d_0}_G$. Since $\sigma_1, \cdots, \sigma_n$ are i.i.d., we have
    \begin{align}
        \BE_{\bsigma}\left\|\sum_{i=1}^n\sigma_i\overline{\bx}_i\right\|^2 & = \BE_{\bsigma}\left[\sum_{i,j}\sigma_i\sigma_j\left<\overline{\bx}_i, \overline{\bx}_j\right>\right]\\
        & = \sum_{i\neq j}\left<\overline{\bx}_i, \overline{\bx}_j\right>\BE_{\bsigma}\sigma_i\sigma_j + \sum_{i=1}^n \|\overline{\bx}_i\|^2\BE_{\bsigma}\sigma_i^2\\
        \label{eq:rademacher_proof_3}
        & = \sum_{i=1}^n\|\overline{\bx}_i\|^2 \le n\max_{i\in [n]}\|\overline{\bx}_i\|^2.
    \end{align}
    Eq.~\eqref{eq:rademacher_proof_2} combined with \eqref{eq:rademacher_proof_3} completes the proof.
\end{proof}

We also need the following standard result on the generalization bound based on the Rademacher complexity \citep{shalev2014understanding}.
\begin{lemma}
\label{lemma:rademacher_generalization}
    Let $\CH$ be a set of hypotheses, $\CZ=\R^{d_0}\times \{\pm 1\}$ be the set of labeled samples, and
    \begin{align}
        \ell: \CH\times \CZ\to [0, \infty), \quad (h, \bz)\mapsto \ell(h, \bz),
    \end{align}
    be a loss function satisfying $0\le \ell(h,\bz)\le c$ for all $\bz\in \CZ$ and $h\in \CH$. Define the function class $\CF = \ell(\CH, \cdot) \coloneqq \left\{\bz \mapsto \ell\left(h, \bz\right): h\in \CH\right\}$. Then, for any $\delta>0$ and any distribution $\CD$ on $\CZ$, we have with probability at least $1-\delta$ over i.i.d. samples $S = (\bz_1, \cdots, \bz_n)\sim \CD^n$ that
    \begin{align}
        \sup_{h\in \CH}\CL_{\CD}(h)-\CL_{S}(h)\le 2\CR_n(\CF) + c\sqrt{\frac{\log(1/\delta)}{2n}},
    \end{align}
    where $\CL_\CD(h)\coloneqq \BE_{\bz\sim \CD}\ell(h, \bz)$ and $\CL_{S}(h)\coloneqq \frac{1}{n}\sum_{i=1}^n \ell(h, \bz_i)$ are, respectively, the population and empirical loss of a hypothesis $h\in \CH$, and $\CR_n(\CF) \coloneqq \BE_{S\sim \CD^n}\widehat{\CR}_n(\CF, S)$ is the Rademacher complexity of $\CF$.
\end{lemma}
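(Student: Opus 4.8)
The plan is to establish this classical uniform-convergence bound by combining a concentration inequality with a symmetrization argument, working with the function class $\CF = \ell(\CH, \cdot)$. First I would introduce the single random variable
\begin{align}
    \Phi(S) \coloneqq \sup_{h\in\CH}\left(\CL_\CD(h) - \CL_S(h)\right),
\end{align}
which is precisely the quantity to be bounded from above. The proof then rests on two ingredients: (i) concentration of $\Phi(S)$ around its expectation $\BE_{S}[\Phi(S)]$, and (ii) the bound $\BE_S[\Phi(S)]\le 2\CR_n(\CF)$.

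For the concentration step I would apply the bounded-differences (McDiarmid) inequality. Since $0\le \ell(h,\bz)\le c$ uniformly, replacing a single sample $\bz_i$ by any $\bz_i'$ alters $\CL_S(h)=\frac{1}{n}\sum_j \ell(h,\bz_j)$ by at most $c/n$ for each fixed $h$; because the supremum is $1$-Lipschitz in the sup-norm of its argument, the map $S\mapsto\Phi(S)$ inherits the same $c/n$ bounded-differences constant. McDiarmid's inequality then gives, with probability at least $1-\delta$,
\begin{align}
    \Phi(S) \le \BE_{S}[\Phi(S)] + c\sqrt{\frac{\log(1/\delta)}{2n}}.
\end{align}

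For the expectation step I would use the standard ghost-sample symmetrization. Introduce an independent copy $S'=(\bz_1',\dots,\bz_n')\sim\CD^n$ and write $\CL_\CD(h)=\BE_{S'}[\CL_{S'}(h)]$; pulling the $S'$-expectation outside the supremum by Jensen's inequality yields $\BE_S[\Phi(S)]\le \BE_{S,S'}\sup_{h}\frac{1}{n}\sum_{i=1}^n\left(\ell(h,\bz_i')-\ell(h,\bz_i)\right)$. Since $\bz_i$ and $\bz_i'$ are i.i.d., the joint law is invariant under swapping the $i$-th pair, so inserting i.i.d. Rademacher signs $\sigma_i$ in front of each difference leaves the expectation unchanged; splitting the supremum across the two sums by subadditivity and recombining bounds the whole expression by $2\,\BE_{S,\bsigma}\sup_h \frac{1}{n}\sum_i\sigma_i\ell(h,\bz_i)=2\CR_n(\CF)$. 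Chaining this with the McDiarmid bound delivers the stated inequality.

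The routine calculations are standard; the one place demanding care is the symmetrization step — specifically, justifying that inserting the Rademacher variables preserves the expectation (which relies on the exchangeability of $\bz_i$ and $\bz_i'$) and checking that the Jensen and subadditivity manipulations all point in the direction that produces an \emph{upper} bound. Everything else follows directly from the uniform boundedness of $\ell$ and the definition of $\CR_n(\CF)$.
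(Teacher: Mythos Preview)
Your argument is correct and is exactly the standard textbook proof (McDiarmid plus ghost-sample symmetrization). The paper itself does not supply a proof of this lemma at all; it simply states it as a ``standard result on the generalization bound based on the Rademacher complexity'' and cites \cite{shalev2014understanding}, whose proof is the one you have sketched.
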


Finally, we can prove \cref{thm:generalization} restated below

\begin{reptheorem}{thm:generalization}
    Let $\CD$ be a $G$-invariant distribution over $\R^{d_0}\times \{\pm 1\}$ that is linearly separable by an invariant classifier $\bbeta_0\in\R_{G}^{d_0}$. Define
    \begin{align}
        \overline{R} = \inf\left\{r> 0: \|\overline{\bx}\|\le r ~\text{\normalfont with probability 1}\right\}.
    \end{align}
    Let $S = \left\{(\bx_i, y_i)\right\}_{i=1}^n$ be i.i.d. samples from $\CD$, and let $\bbeta_{\steer}^\infty$ be the limit direction of a steerable-network-parameterized linear predictor trained using gradient flow on $S$. Then, for any $\delta>0$, we have with probability at least $1-\delta$ (over random samples $S\sim\CD^n$) that
    \begin{align}
    \label{eq:generalization_steerable_appendix}
        \BP_{(\bx, y)\sim\CD}\left[y\neq \sign \left(\left<\bx, \bbeta^\infty_{\steer}\right>\right)\right] \le \frac{2\overline{R}\|\bbeta_0\|}{\sqrt{n}} + \sqrt{\frac{\log(1/\delta)}{2n}}.
    \end{align}
\end{reptheorem}

\begin{proof}[Proof of \cref{thm:generalization}] Let $\CH = \{\bbeta\in \R_G^{d_0}: \|\bbeta\|\le \|\bbeta_0\|\}$ and with slight abuse of notation we identify $\bbeta\in \CH$ with the invariant linear map $\bx\mapsto \left<\bx, \bbeta\right>$. According to \cref{thm:gcnn_implicit_bias}, let $\bbeta^* = \tau \bbeta^\infty_{\steer}$, $\tau > 0$, be the minimizer of \eqref{eq:main_theorem_equivalent_unitary}. Since by assumption $\bbeta_0$ also satisfies the constraint in \eqref{eq:main_theorem_equivalent_unitary}, we have $\|\bbeta^*\|\le \|\bbeta_0\|$, and hence $\bbeta^*\in \CH$.

Consider the \textit{ramp} loss $\ell$ defined as
\begin{align}
    \label{eq:ramp}
    \ell:\CH\times(\R^{d_0}\times \{\pm 1\})\to [0, 1], \quad (\bbeta, (\bx, y))\mapsto \min\{1, \max\{0, 1-y\left<\bx, \bbeta\right>\}\}.
\end{align}
It is easy to verify that $\ell$ upper bounds the 0-1 loss: for all $\bbeta\in \CH$ and $(\bx, y)\in \R^{d_0}\times \{\pm 1\}$,
    \begin{align}
    \label{eq:ramp_upper_bound}
        \ell(\bbeta, (\bx, y))\ge \mathbbm{1}_{\{y\neq \sign(\left<\bx, \bbeta\right>) \}}.
    \end{align}
This implies that
\begin{align}
    \BP_{(\bx, y)\sim\CD}\left[y\neq \sign \left(\left<\bx, \bbeta^\infty_{\steer}\right>\right)\right] & = \BP_{(\bx, y)\sim\CD}\left[y\neq \sign \left(\left<\bx, \bbeta^*\right>\right)\right]\\
    & = \BE_{(\bx, y)\sim\CD}\mathbbm{1}_{\{y\neq \sign(\left<\bx, \bbeta^*\right>) \}} \\
    & \le \BE_{(\bx, y)\sim\CD}\ell(\bbeta^*, (\bx, y))=\CL_{\CD}(\bbeta^*).
\end{align}
Since $\CL_S(\bbeta^*)$ is always 0 by definition \eqref{eq:main_theorem_equivalent_unitary} and $\bbeta^*
\in \CH$, we have by \cref{lemma:rademacher_generalization} that 
\begin{align}
    \BP_{(\bx, y)\sim\CD}\left[y\neq \sign \left(\left<\bx, \bbeta^\infty_{\steer}\right>\right)\right] & \le \CL_{\CD}(\bbeta^*) = \CL_{\CD}(\bbeta^*)-\CL_S(\bbeta^*)\le \sup_{\bbeta\in \CH}\CL_{\CD}(\bbeta) - \CL_S(\bbeta)\\
    & \le 2\CR_n(\CF) + \sqrt{\frac{\log(1/\delta)}{2n}},
\end{align}
where $\CF = \ell(\CH, \cdot) =\left\{(\bx, y) \mapsto \ell\left(\bbeta, (\bx, y)\right):\bbeta\in \CH\right\}$. Therefore, to prove \cref{thm:generalization} it suffices to show that $\CR_n(\CF)\le \frac{\overline{R}\|\bbeta_0\|}{\sqrt{n}}$. To this end, let $S = \{(\bx_i, y_i): i\in [n]\}$ be a set of $n$ labeled samples, and let $S_{\bx} = \{\bx_i: i\in [n]\}$ be the corresponding set of unlabeled inputs, then
\begin{align}
    \widehat{\CR}_n(\CF, S) & = \frac{1}{n}\BE_{\bsigma}\sup_{f\in \CF}\sum_{i=1}^n\sigma_i f(\bx_i, y_i)\\
    & = \frac{1}{n}\BE_{\bsigma}\sup_{\bbeta\in \CH}\sum_{i=1}^n\sigma_i \ell\left(\bbeta, (\bx_i, y_i)\right)\\
    & = \frac{1}{n}\BE_{\bsigma}\sup_{\bbeta\in \CH}\sum_{i=1}^n\sigma_i \min\left\{1, \max\{0, 1-y_i\left<\bx_i, \bbeta\right>\}\right\}\\
    & = \frac{1}{n}\BE_{\bsigma}\sup_{\bbeta\in \CH}\sum_{i=1}^n\sigma_i \Phi_i(\left<\bx_i, \bbeta\right>),
\end{align}
where $\Phi_i: \R\to\R, a \mapsto \min\left\{1, \max\{0, 1-y_i a\}\right\}$, is $1$-Lipschitz. \cref{lemma:contraction} thus implies that
\begin{align}
    \widehat{\CR}_n(\CF, S) \le \widehat{\CR}_n(\CH, S_{\bx}).
\end{align}
Using \cref{lemma:complexity_invariant_classifier} and the fact that $\overline{\bx}\le \overline{R}$ with probability 1 over $(\bx, y)\sim \CD$, we arrive at
\begin{align}
    \CR_n(\CF) = \BE_{S\sim \CD^n}\widehat{\CR}_n(\CF, S) \le \BE_{S\sim \CD^n}\widehat{\CR}_n(\CH, S_{\bx}) \le \frac{\overline{R}\|\bbeta_0\|}{\sqrt{n}}.
\end{align}
This completes the proof of \cref{thm:generalization}.
\end{proof}

\end{document}